\documentclass[letterpaper]{article} 
\usepackage[preprint]{aaai2027}
\usepackage[hyphens]{url}  
\usepackage{graphicx} 
\usepackage{natbib}  
\usepackage{caption} 
\usepackage{algorithm}
\usepackage{algorithmic}
\newcommand{\NA}{\makebox[2.6em][c]{---}}
\usepackage{newfloat}
\usepackage{amsthm}
\usepackage{listings}
\DeclareCaptionStyle{ruled}{labelfont=normalfont,labelsep=colon,strut=off} 
\floatstyle{ruled}
\newfloat{listing}{tb}{lst}{}
\floatname{listing}{Listing}

\usepackage{booktabs}

\usepackage{parskip}
\usepackage{enumitem}
\usepackage{tablefootnote}
\usepackage{tcolorbox}
\usepackage{subcaption} 
\usepackage[table]{xcolor}
\usepackage{seqsplit}

\usepackage{tikz}
\usetikzlibrary{shapes.geometric, arrows, decorations.markings, shadings}
\usetikzlibrary{arrows.meta,positioning,decorations.pathmorphing}

\tikzstyle{startstop} = [rectangle, rounded corners, minimum width=3cm, minimum height=1cm,text centered, draw=black, fill=red!30]
\tikzstyle{process} = [rectangle, minimum width=3cm, minimum height=1cm, text centered, draw=black, fill=blue!30]
\tikzstyle{arrow} = [thick,->,>=stealth]

\usepackage{amsfonts} 
\usepackage{amsmath}

\newtheorem{theo}{Theorem}
\newtheorem{example}{Example}

\usepackage{bm}
\usepackage{xspace}
\usepackage{xcolor}
\usepackage{multirow}
\newtheorem{definition}{Definition}
\newcommand{\ours}{\texttt{NeurOWL}\xspace}
\newcommand{\hl}[1]{\cellcolor{gray!15}#1}

\usetikzlibrary{arrows.meta,shapes.geometric,positioning,fit,backgrounds,calc}
\usepackage[dvipsnames,svgnames,x11names]{xcolor}

\definecolor{cPurpleF}{RGB}{238,237,254}\definecolor{cPurple}{RGB}{83,74,183}
\definecolor{cBlueF}{RGB}{230,241,251}  \definecolor{cBlue}{RGB}{24,95,165}
\definecolor{cAmberF}{RGB}{250,238,218} \definecolor{cAmber}{RGB}{186,117,23}
\definecolor{cCoralF}{RGB}{250,236,231} \definecolor{cCoral}{RGB}{153,60,29}
\definecolor{cTealF}{RGB}{225,245,238}  \definecolor{cTeal}{RGB}{15,110,86}
\definecolor{cPinkF}{RGB}{251,234,240}  \definecolor{cPink}{RGB}{153,53,86}
\definecolor{cRedF}{RGB}{252,235,235}   \definecolor{cRed}{RGB}{163,45,45}
\definecolor{cGrayF}{RGB}{241,239,232}  \definecolor{cGray}{RGB}{95,94,90}
\definecolor{cGrayM}{RGB}{180,178,169}

\tikzset{
  nd/.style={rectangle,rounded corners=4pt,inner sep=5pt,
             font=\scriptsize\sffamily,align=center,line width=0.4pt},
  Nq/.style ={nd,draw=cPurple!70,fill=cPurpleF,minimum width=58pt,minimum height=32pt},
  N1/.style ={nd,draw=cBlue!70,  fill=cBlueF,  minimum width=72pt,minimum height=38pt},
  N2a/.style={nd,draw=cAmber!70, fill=cAmberF, minimum width=110pt,minimum height=18pt},
  N2b/.style={nd,draw=cCoral!70, fill=cCoralF, minimum width=110pt,minimum height=18pt},
  N2c/.style={nd,draw=cTeal!70,  fill=cTealF,  minimum width=110pt,minimum height=15pt},
  N3/.style ={nd,draw=cPink!70,  fill=cPinkF,  minimum width=72pt,minimum height=44pt},
  Nok/.style={nd,draw=cTeal!70,  fill=cTealF,  minimum width=72pt,minimum height=28pt},
  Nno/.style={nd,draw=cRed!70,   fill=cRedF,   minimum width=72pt,minimum height=28pt},
  dec/.style={diamond,draw=cGray!70,fill=cGrayF,aspect=1.7,
              inner sep=1.5pt,font=\tiny\sffamily,line width=0.4pt},
  a/.style  ={-{Stealth[length=2.2pt,width=2pt]},line width=0.65pt,cGray},
  ya/.style ={a,cTeal},  ra/.style={a,cRed},
  da/.style ={a,cGrayM,dashed},
  lx/.style ={font=\tiny\sffamily,inner sep=1pt},
  yl/.style ={lx,text=cTeal},  rl/.style={lx,text=cRed},  gl/.style={lx,text=cGray},
  hd/.style ={font=\tiny\sffamily\bfseries,text=cGray},
}

\definecolor{c1col}{RGB}{30,100,200}
\definecolor{c2col}{RGB}{30,150,100}
\definecolor{c3col}{RGB}{200,120,20}
\definecolor{c4col}{RGB}{180,30,30}

\tikzset{
  given/.style  = {rectangle, rounded corners=3pt, draw=black, very thick,
                   fill=gray!12, inner sep=5pt, align=center, font=\small},
  pred1/.style  = {rectangle, rounded corners=3pt, draw=c1col, thick,
                   fill=c1col!10, inner sep=5pt, align=center, font=\small},
  pred2/.style  = {rectangle, rounded corners=3pt, draw=c2col, thick,
                   fill=c2col!10, inner sep=5pt, align=center, font=\small},
  pred3/.style  = {rectangle, rounded corners=3pt, draw=c3col, thick,
                   fill=c3col!10, inner sep=5pt, align=center, font=\small},
  pred4/.style  = {rectangle, rounded corners=3pt, draw=c4col, thick,
                   fill=c4col!10, inner sep=5pt, align=center, font=\small},
  solidarr/.style = {->, >=Stealth, thick, black},
  predarr/.style  = {->, >=Stealth, thick, red!80!black, dashed},
  openarr/.style  = {->, >=Stealth, thick, red!80!black, dashed,
                     decorate, decoration={snake, amplitude=1.5pt, segment length=7pt,
                                           pre length=4pt, post length=4pt}},
  elbl/.style = {font=\scriptsize\itshape, text=gray!80!black, pos=0.5, right, xshift=3pt},
}

\title{\ours: An LLM-Based Neural-symbolic Framework for Incomplete OWL Ontology Reasoning}
\author{
  Hui Yang\textsuperscript{\rm 1},
  Jiaoyan Chen\textsuperscript{\rm 1},
  Yiping Song\textsuperscript{\rm 1},
  Renate Schmidt\textsuperscript{\rm 1},
  Wen Zhang\textsuperscript{\rm 2}
}
\affiliations{
  \textsuperscript{\rm 1}The University of Manchester\\
  \texttt{\{hui.yang-2, jiaoyan.chen, renate.schmidt\}@manchester.ac.uk},
  \texttt{yiping.song@postgrad.manchester.ac.uk}\\
  \textsuperscript{\rm 2}Zhejiang University\\
  \texttt{zhang.wen@zju.edu.cn}
}

\begin{document}
\newcommand{\R}{\ensuremath{\mathbb{R}}\xspace}
\newcommand{\Sp}{\ensuremath{\mathbb{S}}\xspace}
\newcommand{\M}{\ensuremath{\mathcal{M}}\xspace}
\newcommand{\N}{\ensuremath{\widetilde{\mathcal{N}}}\xspace}
\newcommand{\Nstar}{\ensuremath{\mathcal{N}^*}\xspace}
\newcommand{\Q}{\ensuremath{\mathcal{Q}}\xspace}

\newcommand{\x}{\ensuremath{\mathbf{x}}\xspace}
\newcommand{\bc}{\ensuremath{\mathbf{c}}\xspace}
\newcommand{\bo}{\ensuremath{\mathbf{o}}\xspace}
\newcommand{\bR}{\ensuremath{\mathbf{R}}\xspace}
\newcommand{\bB}{\ensuremath{\mathbf{B}}\xspace}
\newcommand{\bK}{\ensuremath{\mathbf{K}}\xspace}
\newcommand{\ba}{\ensuremath{\mathbf{a}}\xspace}
\newcommand{\bP}{\ensuremath{\mathbf{P}}\xspace}
\newcommand{\Op}{\ensuremath{\mathbf{O}}\xspace}
\newcommand{\y}{\ensuremath{\mathbf{y}}\xspace}
\newcommand{\V}{\ensuremath{\mathbf{V}}\xspace}
\newcommand{\z}{\ensuremath{\mathbf{z}}\xspace}
\newcommand{\A}{\ensuremath{\mathbf{A}}\xspace}
\newcommand{\X}{\ensuremath{\mathbf{X}}\xspace}
\newcommand{\I}{\ensuremath{\mathbf{I}}\xspace}
\newcommand{\D}{\ensuremath{\mathbf{D}}\xspace}
\newcommand{\bL}{\ensuremath{\mathbf{L}}\xspace}
\newcommand{\Lmc}{\ensuremath{\mathcal{L}}\xspace}
\newcommand{\W}{\ensuremath{\mathbf{W}}\xspace}
\newcommand{\bH}{\ensuremath{\mathbf{H}}\xspace}
\newcommand{\Wm}{\ensuremath{\mathbf{U}}\xspace}
\newcommand{\Ori}{\ensuremath{\mathbf{o}}\xspace}
\newcommand{\tran}{\ensuremath{\mathit{\tran}}}
\newcommand{\Bbox}{\ensuremath{\mathit{Box}}}
\newcommand{\Vol}{\ensuremath{\mathit{Vol}}}
\newcommand{\Agg}{\textit{\Agg}}
\newcommand{\HiT}[1]{\textbf{\textit{x}}_{#1}}
\newcommand{\Coh}{\text{coh}}
\newcommand{\hit}{\text{HiT}\xspace}
\newcommand{\ont}{\text{OnT(w/o r)}\xspace}
\newcommand{\ontr}{$\text{OnT}$\xspace}

\newcommand{\rot}{\ensuremath{\textit{Rot}}\xspace}

\newcommand{\tA}{\ensuremath{\Tilde{\mathbf{A}}}\xspace}
\newcommand{\tD}{\ensuremath{\Tilde{\mathbf{D}}}\xspace}
\newcommand{\bu}{\ensuremath{\mathbf{u}}\xspace}
\newcommand{\bv}{\ensuremath{\mathbf{v}}\xspace}
\newcommand{\be}{\ensuremath{\mathbf{e}}\xspace}
\newcommand{\bp}{\ensuremath{\mathbf{p}}\xspace}

\newcommand{\Bmc}{\ensuremath{\mathcal{B}}\xspace}
\newcommand{\Omc}{\ensuremath{\mathcal{O}}\xspace}
\newcommand{\ORI}{\ensuremath{\mathcal{O}_{RI}}\xspace}
\newcommand{\OCI}{\ensuremath{\mathcal{O}_{CI}}\xspace}
\newcommand{\Imc}{\ensuremath{\mathcal{I}}\xspace}
\newcommand{\Jmc}{\ensuremath{\mathcal{J}}\xspace}
\newcommand{\Xmc}{\ensuremath{\mathcal{X}}\xspace}
\newcommand{\Mmc}{\ensuremath{\mathcal{M}}\xspace}
\newcommand{\Vmc}{\ensuremath{\mathcal{V}}\xspace}

\newcommand{\starM}{$\top\!\bot^\ast$-module}
\newcommand{\ALC}{\ensuremath{\mathcal{ALC}}\xspace}
\newcommand{\ALCH}{\ensuremath{\mathcal{ALCH}}\xspace}
\newcommand{\EL}{\ensuremath{\mathcal{EL}}\xspace}
\newcommand{\ELp}{\ensuremath{\mathcal{EL}^+}\xspace}
\newcommand{\ELH}{\ensuremath{\mathcal{ELH}}\xspace}

\newcommand{\NC}{\ensuremath{\mathsf{N_C}}\xspace}
\newcommand{\NR}{\ensuremath{\mathsf{N_R}}\xspace}
\newcommand{\ND}{\ensuremath{\mathsf{N_D}}\xspace}
\newcommand{\NI}{\ensuremath{\mathsf{N_I}}\xspace}

\newcommand{\bmu}{\boldsymbol{\mu}}
\newcommand{\boxsqel}{Box$^2$EL\xspace}
\maketitle

\begin{abstract}
OWL ontologies provide  a formal knowledge representation framework that enables semantic reasoning, and have been widely adopted across domains such as healthcare and bioinformatics. 
In practice, however, real-world ontologies are often incomplete, which pose challenges for reasoning. 
In this work, we focus on a fundamental subsumption reasoning problem: given an incomplete ontology and a candidate (non-entailed) subsumption, determine whether the subsumption is semantically plausible and, if so, providing a logically sound explanation containing potential missing axioms. 
To address this subsumption reasoning problem, we propose \ours, an end-to-end neuro-symbolic framework that jointly performs verification and abduction. Its key contribution is the ability to automatically identify meaningful missing components that are compatible with existing information and plausible in real-world contexts, without requiring predefined hypothesis candidates.
We evaluate \ours on three real-world ontologies across multiple domains, demonstrating strong and robust performance across different domains.

\end{abstract}


\section{Introduction}\label{sec:intro}
Ontologies expressed in the Web Ontology Language (OWL) provide a formal framework for representing structured knowledge and enable automated reasoning through Description Logics (DLs)~\cite{baader2003description}. Owing to their precise and shareable representation of machine-interpretable semantics, OWL ontologies are widely used in knowledge-intensive domains such as biomedicine~\cite{ashburner2000gene} and healthcare~\cite{SNOMED}.

A fundamental reasoning task in ontology engineering is subsumption reasoning: determining whether a subsumption relationship holds between two concepts and providing explanations for such entailments. For complete and consistent ontologies, subsumption can be determined by standard reasoners~\cite{glimm2014hermit,kazakov2012elk}, and explanations can be derived as \emph{justifications} (\textit{i.e.}, minimal sets of axioms sufficient for the entailment)~\cite{penaloza2020axiom,PULi,yang2022hypergraph,alrabbaa2024explaining}. 
In practice, however, real-world ontologies are often incomplete due to different reasons like evolving domain knowledge and the high cost of expert curation. As a result, some valid subsumptions may not be entailed because some required axioms are missing.

\emph{Abductive reasoning} has been proposed as a natural solution to this problem~\cite{du2017practical,haifani2022connection,wei2014abduction,del2019abox,du2012towards,du2014tractable,halland2012abox,klarman2011abox,koopmann2021signature,pukancova2020aaa,DBLP:conf/owled/ElsenbroichKS06,koopmann2020signature,bienvenu2008complexity,haak2025not}. In this work, we concentrate on \emph{TBox abduction} that aims to find a set of hypotheses (missing axioms) $\mathcal{H}$ such that, when added to an incomplete ontology $\mathcal{O}'$, the target subsumption $\alpha$ becomes entailed (\textit{i.e.}, $\mathcal{O}' \cup \mathcal{H} \models \alpha$). 
Despite their promise, existing approaches exhibit several limitations: (1) they typically rely on a predefined hypothesis space, restricting candidate axioms to a fixed vocabulary~\cite{haifani2022connection, koopmann2020signature}; (2) they often assume that the target subsumption is valid, preventing their applicability in more general settings where the validity of $\alpha$ is itself uncertain. 
Recently, there has been growing interest in using large language models (LLMs) for related ontology reasoning and construction tasks such as subsumption prediction and concept placement~\cite{ontolama, chen2023contextual,hit,OnT,shi2024taxonomy,babaei2023llms4ol, zhang2025ontourl,lo2024end,DBLP:journals/pvldb/SunHSXYDTC24}; however, these works generally lack explanatory capabilities.

To address these challenges, we propose \ours, an \textbf{end-to-end} neuro-symbolic framework for explainable subsumption reasoning over incomplete OWL ontologies. 
 \ours\ supports both positive and negative entailment scenarios and does \textbf{not} require a predefined hypothesis space. Instead, it automatically identifies meaningful missing components that are both compatible with the existing information and plausible in real-world contexts, by integrating textual information with formal semantics through LLMs and ontology embeddings.
\ours operates in both training-free and fine-tuned settings, making it adaptable across domains. {\color{black}  
Moreover, by iteratively applying \ours, we can recover any subset- and entailment-minimal solutions as a set of missing axioms (see Theorem~\ref{theo_complete} and the case studies).} 

The core idea of \ours\ is to uncover missing knowledge by identifying intermediate concepts that connect related notions, as illustrated in Figure~\ref{fig:architecture}. Consider a simple example (see Example~\ref{exp:main} for the full version) with the ontology  $\Omc_1'$:
\begin{align*}
    \{&\texttt{PersianCat} \sqsubseteq \texttt{PetCat},\quad \texttt{Cat} \sqsubseteq \texttt{Mammal}, \\
&\texttt{Mammal} \sqsubseteq \texttt{Animal}\sqcap \exists \texttt{produces}.\texttt{Milk}\}.
\end{align*}
Intuitively, one expects the subsumption $\texttt{PersianCat} \sqsubseteq \texttt{Animal}$ to hold, but it cannot be derived from $\Omc'_1$ due to incompleteness. To address this, \ours\ searches for intermediate concepts $C$ such that $\texttt{PersianCat} \sqsubseteq C \sqsubseteq \texttt{Animal}$, which can be used to construct missing axioms and explanations. 
Depending on the scenario, part or all of the connections may be missing (cf. Stages~2a/2b and~3a in Figure~\ref{fig:architecture}). 
For example, \ours\ may select $C = \texttt{Cat}$ and construct the missing axiom $\texttt{PetCat} \sqsubseteq \texttt{Cat}$, while the other part, $\texttt{Cat} \sqsubseteq \texttt{Animal}$, is entailed by $\Omc_1'$ and is therefore not missing).
Then, the explanation consist of the missing axioms and the axioms that entail $\texttt{Cat} \sqsubseteq \texttt{Animal}$. 
Candidate intermediate concepts are identified using ontology embeddings~\cite{OnT} and LLMs, which capture semantic similarity and verify likely missing connections.   If no suitable intermediate concept is found, \ours\ treats the target subsumption as a simple, indivisible axiom, proposes it directly, and validates it using an LLM (Stage~3b in Figure~\ref{fig:architecture}).

We evaluate \ours\ on multiple real-world ontologies across diverse domains. Experimental results demonstrate that our approach achieves strong performance in concept subsumption reasoning while also generating meaningful explanations for incomplete input ontologies. 
Our results shows that, on datasets constructed from the real-world ontologies FoodOn~\cite{dooley2018foodon}, Gene Ontology~\cite{ashburner2000gene} and  Snomed CT~\cite{SNOMED}, \ours\ achieves high performance in checking subsumption plausibility, with F1 scores of up to 0.970. 
They also demonstrate strong accuracy in explanations generated, as reflected in the correctness of predicted missing axioms, achieving X-F1 and X-F1* scores of up to 0.893 and 0.793, respectively. Further details are provided in evaluation section.

The main contributions of this work are:
\begin{itemize}[leftmargin=*]
    \item We introduce a generalized TBox abduction task that does not require predefined missing axioms candidates and handles cases where the given subsumption is incorrect.
    \item We propose \ours, an end-to-end neuro-symbolic framework for reasoning over incomplete ontologies, utilising ontology embedding and LLM.
    \item We empirically evaluate our approach on real-world ontologies, demonstrating its effectiveness and robustness under varying degrees of incompleteness.
\end{itemize}

Due to space limitations, theoretical proofs, additional results, and the source code are included in the supplementary material and will be released upon acceptance.

\begin{figure}[t]
\centering
\definecolor{blueboxbg}{RGB}{230, 233, 250}
\definecolor{blueboxtitle}{RGB}{80, 90, 180}
\definecolor{orangeboxbg}{RGB}{253, 235, 210}
\definecolor{orangeboxtitle}{RGB}{190, 110, 20}
\begin{tikzpicture}[
every node/.style={align=center, font=\large},
font=\Large,
block/.style={
  draw,
  rounded corners=3pt,
  minimum height=2.6em,
  text width=15em,
  inner sep=5pt,
  line width=0.45pt
},
pill/.style={
  draw,
  rounded corners=5pt,
  minimum height=2.0em,
  text width=11.0em,
  inner sep=5pt,
  line width=0.45pt
},
stagefit/.style={
  draw,
  rounded corners=5pt,
  line width=0.45pt,
  inner xsep=7pt,
  inner ysep=7pt
},
flow/.style={-stealth, line width=0.7pt, draw=black!55},
yes/.style={-stealth, line width=0.7pt, draw=teal!60!black},
nox/.style={-stealth, line width=0.7pt, draw=red!55!black},
fail/.style={-stealth, dashed, line width=0.55pt, draw=black!35},
lbl/.style={font=\normalsize, inner sep=1.2pt, text=black!55},
ylbl/.style={font=\normalsize\itshape, inner sep=1.2pt, text=teal!60!black},
rlbl/.style={font=\normalsize\itshape, inner sep=1.2pt, text=red!55!black},
slbl/.style={font=\bfseries\normalsize, inner sep=2pt},
qbox/.style={fill=black!3, draw=black!25},
sone/.style={fill=violet!6, draw=violet!30},
bone/.style={fill=violet!15, draw=violet!42},
stwo/.style={fill=blue!2, draw=blue!15, densely dashed},
sthree/.style={fill=orange!2, draw=orange!15, densely dashed},
tbox/.style={fill=teal!7, draw=teal!36},
fbox/.style={fill=red!5, draw=red!32},
mini circle/.style={
  circle, draw=gray!70, fill=white, thick,
  minimum size=5.2mm, inner sep=0.8pt, font=\normalsize
},
mini solid/.style={-{Stealth[length=3pt]}, thick},
mini miss/.style={-{Stealth[length=3pt]}, thick, dashed, red!70},
mini elabel/.style={font=\normalsize, midway, right, inner sep=1pt},
mini title blue/.style={font=\bfseries\normalsize, text=blueboxtitle},
mini title orange/.style={font=\bfseries\normalsize, text=orangeboxtitle},
scale=0.55, transform shape
]

\coordinate (mainaxis) at (0,0);
\coordinate (rightaxis) at (8.0,0);

\def\dx{2.6}
\def\dyone{1.5}
\def\dytwo{4.3}
\def\dyfalse{3}

\node[pill, qbox] (Q) at (mainaxis)
{\textbf{Input:} $A \sqsubseteq B\;?$};

\node[block, bone] (S1box) at ($(Q.center)+(0,-\dyone)$)
{\textbf{1. Reasoner over $\mathcal{O}'$}};



\node[
  draw=blueboxtitle!50,
  fill=blueboxbg,
  fill opacity=0.35,
  rounded corners=8pt,
  thick,
  minimum width=4.55cm,
  minimum height=2.5cm
] (S2a) at ($(S1box.center)+(-\dx,-3)$) {};

\node[
  draw=blueboxtitle!50,
  fill=blueboxbg,
  fill opacity=0.35,
  rounded corners=8pt,
  thick,
  minimum width=4.55cm,
  minimum height=2.5cm
] (S2b) at ($(S1box.center)+(\dx,-3)$) {};

\node[mini title blue] at ($(S2a.north)+(0,0.2)$) {2a. Downward};
\node[mini circle] (S2aB)  at ($(S2a.center)+(-0.8,0.90)$) {$B$};
\node[mini circle] (S2aBp) at ($(S2a.center)+(-0.8,0.00)$) {$B'$};
\node[mini circle] (S2aA)  at ($(S2a.center)+(-0.8,-0.90)$) {$A$};

\draw[mini solid] (S2aBp) -- (S2aB)
  node[mini elabel, black!80] {$\mathcal{O}'\models B'\sqsubseteq B$};
\draw[mini miss] (S2aA) -- (S2aBp)
  node[mini elabel, red!70] {\ $A\sqsubseteq B'$ missing?};

\node[mini title blue] at ($(S2b.north)+(0,0.2)$) {2b. Upward};
\node[mini circle] (S2bB)  at ($(S2b.center)+(-0.8,0.90)$) {$B$};
\node[mini circle] (S2bAp) at ($(S2b.center)+(-0.8,0.00)$) {$A'$};
\node[mini circle] (S2bA)  at ($(S2b.center)+(-0.8,-0.90)$) {$A$};

\draw[mini solid] (S2bA) -- (S2bAp)
  node[mini elabel, black!80] {$\mathcal{O}'\models A\sqsubseteq A'$};
\draw[mini miss] (S2bAp) -- (S2bB)
  node[mini elabel, red!70] {\  $A'\sqsubseteq B$ missing?};

\begin{scope}[on background layer]
\node[stagefit, stwo,
  fit=(S2a)(S2b),
  label={[slbl, text=blue!60!black, fill=white]above left:\texttt{}}
] (S2box) {};
\end{scope}

\node[
  draw=orangeboxtitle!60,
  fill=orangeboxbg,
  fill opacity=0.45,
  rounded corners=8pt,
  thick,
  minimum width=4.4cm,
  minimum height=2.5cm
] (S3a) at ($(S2box.center)+(-\dx,-\dytwo)$) {};

\node[
  draw=orangeboxtitle!60,
  fill=orangeboxbg,
  fill opacity=0.45,
  rounded corners=8pt,
  thick,
  minimum width=4.4cm,
  minimum height=2.5cm
] (S3b) at ($(S2box.center)+(\dx,-\dytwo)$) {};

\node[mini title orange] at ($(S3a.north)+(0,0.18)$) {3a. Bidirectional};
\node[mini circle] (S3aB) at ($(S3a.center)+(-0.8,0.9)$) {$B$};
\node[mini circle] (S3aC) at ($(S3a.center)+(-0.8,0.00)$) {$C$};
\node[mini circle] (S3aA) at ($(S3a.center)+(-0.8,-0.9)$) {$A$};

\draw[mini miss] (S3aC) -- (S3aB)
  node[mini elabel, red!70] {\ $C\sqsubseteq B$ missing?};
\draw[mini miss] (S3aA) -- (S3aC)
  node[mini elabel, red!70] {\ $A\sqsubseteq C$ missing?};

\node[mini title orange] at ($(S3b.north)+(-0.8,0.18)$) {3b. Direct Check};
\node[mini circle] (S3bB) at ($(S3b.center)+(-0.8,0.6)$) {$B$};
\node[mini circle] (S3bA) at ($(S3b.center)+(-0.8,-0.6)$) {$A$};

\draw[mini miss] (S3bA) -- (S3bB)
  node[mini elabel, red!70] {\  $A\sqsubseteq B$ missing?};

\begin{scope}[on background layer]
\node[stagefit, sthree,
  fit=(S3a)(S3b),
  label={[slbl, text=orange!50!black, fill=white]above left:\texttt{ }}
] (S3box) {};
\end{scope}

\node[pill, fbox] (F) at ($(S3box.center)+(0,-\dyfalse)$)
{\textcolor{red!60!black}{\textsc{False}}};

\node[pill, tbox] (R1) at ($(rightaxis |- S1box.center)$)
{\textcolor{teal!60!black}{\textsc{True}}\\[1pt]
proof justification $J_{ B'\sqsubseteq B}$};

\node[pill, tbox] (R2) at ($(rightaxis |- S2box.center)$)
{\textcolor{teal!60!black}{\textsc{True}}\\[1pt]
$\{{\color{red}A\sqsubseteq B'}\}\cup J_{ B'\sqsubseteq B}$\\
or $\{{\color{red}A'\sqsubseteq B}\}\cup J_{ A\sqsubseteq A'}$};

\node[pill, tbox] (R3) at ($(rightaxis |- S3box.center)$)
{\textcolor{teal!60!black}{\textsc{True}}\\[1pt]
$\{{\color{red}A \sqsubseteq C,\; C\sqsubseteq B}\}$\\
or $\{{\color{red}A \sqsubseteq B}\}$};

\draw[flow] (Q.south)     -- (S1box.north);
\draw[flow] (S1box.south) -- node[lbl, right]{no} (S2box.north);
\draw[flow] (S2box.south) -- node[lbl, right]{no} (S3box.north);
\draw[nox]  (S3box.south) -- node[lbl, right]{no} (F.north);

\draw[fail] (S2a.east) -- node[lbl, above]{fail} (S2b.west);
\draw[fail] (S3a.east) -- node[lbl, above]{fail} (S3b.west);

\draw[yes] (S1box.east) -- node[ylbl, above]{proved} (R1.west);
\draw[yes] (S2box.east) -- node[ylbl, above]{found}  (R2.west);
\draw[yes] (S3box.east) -- node[ylbl, above]{found}  (R3.west);

\end{tikzpicture}
\caption{Overview of \ours:
Red axioms denote potentially missing axioms.
}
\label{fig:architecture}
\end{figure}

\section{Related Work}
\label{sec:related}

\textbf{Abductive Reasoning} on ontologies can be divided into several categories, including  \textit{TBox abduction}~\cite{du2017practical,haifani2022connection,wei2014abduction}, \textit{ABox abduction}~\cite{del2019abox,du2012towards,du2014tractable,halland2012abox,klarman2011abox,koopmann2021signature,pukancova2020aaa,haak2025not},
\textit{knowledge-base abduction}~\cite{DBLP:conf/owled/ElsenbroichKS06,koopmann2020signature}, and \textit{concept abduction}~\cite{bienvenu2008complexity}. These categories are distinguished by the types of the missing elements: rules (TBox axioms), facts (ABox axioms),  a combination of both, and concepts, respectively. In this work, we focus on TBox abduction, where the missing elements are TBox axioms, as introduced in preliminary section.

Existing approaches to TBox abduction rely on different assumptions regarding the subsumption of a predefined set of candidate axioms. For example, \citet{wei2014abduction} assumes a human-verified selection defined by an oracle function provided by domain experts; \citet{du2017practical}  relies on a set of plausible patterns; and \citet{haifani2022connection,koopmann2020signature} consider only axioms formulated over a given set of concepts and roles, known as signatures. These assumptions limit the practicality of such methods in real-world applications. Moreover, these approaches focus on logical computation and largely ignore the use of informal meta information like textual labels which also contain important semantics.

\textbf{Ontology Embeddings} aim to encode ontology entities (\textit{i.e.}, concepts, roles, and instances) as numerical vectors while preserving their structural and semantic properties. These embeddings support several downstream machine learning and data mining tasks including concept subsumption prediction~\cite{chen2024ontology}. 
Existing approaches can be broadly divided into two categories. 
(1) \textit{Geometric model-based methods}~\cite{kulmanov_embeddings_2019,xiong_faithiful_2022,DBLP:conf/www/JackermeierC024,yang2025transbox} represent ontology entities as geometric objects, translating description logic (DL) operators into geometric operations. For example, concept subsumption and intersection are modeled as region inclusion and intersection, respectively.
(2) \textit{Language-model-based methods}~\cite{DBLP:journals/bioinformatics/SmailiGH19,DBLP:journals/ml/ChenHJHAH21, kulmanov2021semantic} leverage both formal knowledge and textual information to capture both linguistic and semantic features in the embedding space. 
More recently introduced, HiT~\cite{hit} combines language models with hierarchical embeddings in hyperbolic space to represent taxonomies and support direct subsumption inference between two arbitrary concept labels. Building on this, OnT~\cite{OnT} extends the approach to $\mathcal{EL}$ ontologies by introducing role embeddings as rotations and additional loss functions to encode logical properties. In this work, we adopt OnT as our embedding method.

\textbf{LLMs for Ontology Subsumption} have been investigated in many tasks such as subsumption prediction or inference~\cite{ontolama, zhang2025ontourl, chen2023contextual, hit, OnT}, taxonomy completion~\cite{shi2024taxonomy}, axiom learning~\cite{babaei2023llms4ol, zhang2025ontourl, lo2024end}, and the intrinsic knowledge of hierarchical structures in LLMs~\cite{DBLP:journals/pvldb/SunHSXYDTC24}. 
However, to the best of our knowledge, few studies have specifically explored the use of LLMs for assessing the semantic plausibility of subsumptions. 
\citet{zhao2026subsumption} is a related approach but assuming additional verification by human experts. 
Moreover, most of these LLM-based works, as well as the above introduced ontology embedding-based works, focus on the correctness of the subsumption, and little attention has been devoted to explanations. 
\citet{huiWWW26} investigates explanation generation and considers incomplete settings, but assume the correctness of the given subsumption and focus on evaluating the capabilities of LLMs instead of developing an end-to-end reasoner.



\section{Preliminaries}
\label{sec:prelim}

Ontologies provide a formal framework for representing structured knowledge through collections of logical statements, known as \emph{axioms}. These axioms describe relationships between \emph{concepts} (unary predicates) and \emph{roles} (binary predicates).  
Here, we take $\mathcal{EL}$ ontologies as an example, which is a lightweight yet expressive ontology  that has been successfully employed in areas such as life sciences and semantic data integration~\cite{DBLP:books/daglib/0041477}.  
Let $\NC = \{A, B, \ldots\}$, $\NR = \{r, s, \ldots\}$ denote two mutually disjoint sets of \emph{concept names}, \emph{role names}, respectively.  
The set of \emph{$\mathcal{EL}$-concepts} is inductively defined as follows:
\(
C ::= \top \mid A \mid C \sqcap D \mid \exists r.C,
\)
where $A \in \NC$, $r \in \NR$, and $C, D$ are $\mathcal{EL}$-concepts.  
In this work, we concentrated on TBox axioms, which also called \emph{General Concept Inclusions} (GCIs) axioms of the form $C \sqsubseteq D$, where both $C$ and $D$ are $\mathcal{EL}$-concepts. 

\begin{example}\label{exp0}
Given the concept names 
\texttt{PersianCat},\ \texttt{PetCat},\ \texttt{Cat},\ \texttt{Mammal},\ \texttt{Animal}, \texttt{Milk}
and the role $\texttt{produces}$, we may define a simple ontology $\mathcal{O}_1'$ as follows:
\(
\{\texttt{PersianCat} \sqsubseteq \texttt{PetCat}, \;
\texttt{Cat} \sqsubseteq \texttt{Mammal}, \;
\texttt{Mammal} \sqsubseteq \texttt{Animal} \sqcap \exists\,\texttt{produces}.\texttt{Milk}\}.
\)
\end{example}

\noindent
Logical reasoning over an ontology $\Omc$ derives new knowledge from its axioms. Formally, entailment ($\models$) is defined via interpretations and models (\textit{e.g.}, from $A \sqsubseteq B$ and $B \sqsubseteq C \in \Omc$, it follows that $\Omc \models A \sqsubseteq C$, see~\citet{DBLP:books/daglib/0041477} for more details). To explain an entailment, one can examine its \emph{justifications}: minimal sets of axioms that entail it~\cite{penaloza2020axiom}.

\begin{definition}[Justification]\label{defi:justfication}
A \emph{justification} for $A \sqsubseteq B$ is a subset $J_{A\sqsubseteq B} \subseteq \mathcal{O}$ that is minimal with respect to set inclusion and satisfies $J_{A\sqsubseteq B} \models A \sqsubseteq B$.
\end{definition}

\section{The \ours System}
\label{sec:method}



The task we wish to solve is the plausibility of a given concept inclusion $\alpha=  A \sqsubseteq B$ being implied by a possibly incomplete
ontology $\Omc'$, and return a set $E$ of axioms as a possible explanation, \textit{i.e.},
We aim to determine the plausibility of given concept subsumptions over an
$\Omc', E \models \alpha$. 
To ensure justifications rely primarily on the stated axioms in $\Omc'$ we limit the approach to return at most
two missing Concept Inclusion axioms, if possible.

Without loss of generality, we focus on subsumptions of the form \(A \sqsubseteq B\)  between atomic concepts $A, B$. This setting generalizes naturally to subsumptions between complex concepts, \(C \sqsubseteq D\), since a reasoner can reduce complex expressions by introducing fresh concept names (\textit{e.g.}, \(A \equiv C\) and \(B \equiv D\)). Moreover, embeddings can be applied on complex concepts through verbalizations, as demonstrated in \cite{OnT}.



\subsection{Main Structure}
The overall \ours framework is illustrated in Figure~\ref{fig:architecture}. It integrates logical structure with neural verification to predict missing axioms and consists of three stages:
\begin{itemize}[leftmargin=*]
    \item \textit{Stage 1. Logical Checking}: Determine whether the input query can be logically entailed by $\mathcal{O}'$ using a standard ontology reasoner.
    \item \textit{Stage 2. Logical Bridging}: Identify potentially missing axioms using candidate concepts induced by the logical structure of $\mathcal{O}'$, combined with LLM-based validation.
    \item \textit{Stage 3. Bridging by Embedding}: Generate candidates directly based on semantic similarity using embeddings when the previous stages fail.
\end{itemize}


In \textbf{Stage 1} (Logical Checking),
we use a standard description logic reasoner to test whether
\(
\mathcal{O}' \models A \sqsubseteq B
\)
holds. If so, the system returns \textsc{True} with a justification derived from $\mathcal{O}'$. Otherwise, \ours assumes some axioms are missing and proceeds to subsequent stages.

In \textbf{Stage 2} (Logical Bridging), we  attempts to identify \emph{missing bridge concepts} using the logical information in $\mathcal{O}'$. There are two cases:
\begin{itemize}[leftmargin=*]
    \item \textit{Stage~2a: Downward}. Consider each child $B'$ of $B$ in $\mathcal{O}'$ (\textit{i.e.}, $\mathcal{O}' \models B' \sqsubseteq B$) as candidate for a bridge concepts. Then, $A \sqsubseteq B'$ is treated as a candidate missing axiom and will be verified by the LLM.

    \item \textit{Stage~2b: Upward}. Consider each parent $A'$ of $A$ in $\mathcal{O}'$ (\textit{i.e.}, $\mathcal{O}' \models A \sqsubseteq A'$) as candidate, and use the LLM to validate the corresponding potential missing axiom $A' \sqsubseteq B$.
\end{itemize}
Each missing axiom candidate is then verified by an LLM using prompt learning. 
If a valid bridge is found, the system returns \textsc{True} along with the constructed justification. 
For example, if $A \sqsubseteq B'$ is validated, the justification is:
\(
\{A \sqsubseteq B'\}\cup J_{B' \sqsubseteq B},
\)

Note that LLMs may produce redundant answers that can be pruned directly. For instance, if $B_1'$ and $B_2'$ are selected in Stage 2a with $\Omc' \models B_1' \sqsubseteq B_2'$, it suffices to retain $B_1'$ with missing axiom $A \sqsubseteq B_1'$, since $A \sqsubseteq B_2'$ follows from $\Omc' \cup \{A \sqsubseteq B_1'\}$. Example~\ref{exp:main} provides a more realistic instance.

In implementation, instead of directly calling LLM to check each potential missing axiom, \ours first uses an ontology embedding model (\textit{e.g.}, OnT~\cite{OnT}) to retrieve the top-$k$ candidates, which are ranked by the subsumption score defined by the embedding model.
This not only reduces the cost and computation time, but also utilises more semantics embedded.


In \textbf{Stage 3 (Bridging by Embeddings)},  
if no suitable candidates are found in the previous stage, we search for missing axioms based purely on semantic information.

\begin{itemize}[leftmargin=*]
    \item \textit{Stage 3a (bidirectional search):} We first perform a bidirectional search (Stage~3a in Figure~\ref{fig:architecture}) for an intermediate concept $C$ such that
\(
A \sqsubseteq C \sqsubseteq B,
\)
where $C$ is, by default, drawn from all atomic concepts in $\Omc'$ (excluding $A$ and $B$), but can be customized based on user interest (\textit{e.g.}, existential restriction concepts of the form $\exists r.B_1$). 

We then select the top-$k$ candidates and verify them using the LLM. Each candidate is ranked by the scoring function
\[
\bar{s}(C) = \frac{1}{2}\big(s(A \sqsubseteq C) + s(C \sqsubseteq B)\big),
\]
{\color{black}where $s(A \sqsubseteq C)$ denotes the plausibility score of the axiom $A \sqsubseteq C$ produced by the embedding model.}
If a valid $C$ is identified, the system returns \textsc{True} with the justification
\(
\{A \sqsubseteq C,\; C \sqsubseteq B\}.
\)
Otherwise, the subsumption \(A \sqsubseteq B\) is treated as an \emph{elementary assertion} (\textit{i.e.}, requiring no further decomposition) and as a potential missing axiom itself. 

\item \textit{Stage 3b (Direct check):} Then, \ours uses LLM to check the correctness of the subsumption \(A \sqsubseteq B\) directly in Stage~3b, as shown in Figure~\ref{fig:architecture}.
If the LLM judges \(A \sqsubseteq B\) to be correct, \ours returns \textsc{True} along with the subsumption itself as the missing axiom and explanation. Otherwise, \ours indicating that the input subsumption does not plausibly hold and returns \textsc{False}.
\end{itemize}


\begin{example}[Continuation of Example~\ref{exp0}]\label{exp:main}
The ontology $\mathcal{O}_1'$ given in Example \ref{exp0} is incomplete and miss the axiom:
\(
\texttt{PetCat} \sqsubseteq \texttt{Cat}.
\)
Thus can not entail the following  subsumption: 
\(
\texttt{PetCat} \sqsubseteq \texttt{Animal}.
\)
If we input this subsumption into \ours, then we have:
\begin{itemize}[leftmargin=*]
    \item \textbf{Stage 1:} 
    The reasoner can not entail the given subsumption due to the incompleteness of $\mathcal{O}_1'$.

    \item \textbf{Stage 2a (Downward):} 
    The system explores subclasses of $\texttt{Animal}$; in this simple case, only $\texttt{Mammal}$ and $\texttt{Cat}$ are considered, so no filtering step is required.
    This yields two candidate missing axioms:
    \(
    \texttt{PetCat} \sqsubseteq \texttt{Cat}, \; \texttt{PetCat} \sqsubseteq \texttt{Mammal}.
    \)
    A LLM is then used to select the axioms that are likely to be valid. If the LLM does not select any candidates, the process proceeds to the next stage.

    \item \textbf{Return:} 
    Suppose the LLM identifies both candidate axioms as valid. In this case, \ours concludes that the queried subsumption holds, yielding two outputs. 
    Note that $\texttt{PetCat} \sqsubseteq \texttt{Mammal}$ can be ignored, as it is derivable from $\Omc_1'$ together with $\texttt{PetCat} \sqsubseteq \texttt{Cat}$. Finally, we output the missing axiom.
    \(
    \texttt{PetCat} \sqsubseteq \texttt{Cat},
    \)
    and the corresponding explanation:
    \(
    \{\texttt{PetCat} \sqsubseteq \texttt{Cat}\} \;\cup\; \{\texttt{Cat} \sqsubseteq \texttt{Mammal},\;\texttt{Mammal} \sqsubseteq \texttt{Animal}\},
    \)
    where the latter forms a justification for $\mathcal{O}_1' \models \texttt{PetCat} \sqsubseteq \texttt{Animal}$. 

\end{itemize}

\end{example}


{\color{black}
It is worth noting that \ours can be applied iteratively. By repeatedly feeding predicted missing axioms back into the framework, it can, in principle, recover a weakest set of missing axioms sufficient to derive the target conclusion, as defined below. 
Specifically, consider normalized $\mathcal{EL}$ ontologies in which each axiom has one of the following forms:
\(
A \sqsubseteq B,\;
A_1 \sqcap A_2 \sqsubseteq B,\;
A \sqsubseteq \exists r.B,\;
\exists r.A \sqsubseteq B.
\)
We then obtain the following result.

    
    


\begin{theo}\label{theo_complete}
Let $\Omc'$ be an incomplete normalized $\mathcal{EL}$ ontology, let
$\alpha=A\sqsubseteq B$ be a true conclusion, and let $\mathcal{S}$ be a \textbf{minimal solution} as a set of normalized $\mathcal{EL}$ axioms satisfying: 
\textbf{(1)} $\Omc'\cup\mathcal{S}\models\alpha$;
\textbf{(2)} $\mathcal{S}$ is subset- and entailment-minimal: there is no
$\mathcal{S}'\neq\mathcal{S}$ such that $\Omc'\cup\mathcal{S}'\models\alpha$ and
\[
\mathcal{S}'\subseteq\mathcal{S}
\quad\text{or}\quad
\bigl(\mathcal{S}\models\mathcal{S}'\ \text{and}\
\mathcal{S}'\not\models\mathcal{S}\bigr);
\]
\textbf{(3)} every concept and role occurring in $\mathcal{S}$ also occurs in $\Omc'$.

Assume that the Stage~3 candidate set $\mathcal{C}$ contains every concept of the form
$A$, $\exists r.A$, or $A\sqcap B$, where $A,B$ are concept names and $r$ is a role name occurring in $\Omc'$.
Then iterative applications of \ours\ can recover every axiom in $\mathcal{S}$.
\end{theo}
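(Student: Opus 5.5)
We prove the statement by showing that each axiom $\beta\in\mathcal{S}$ can be presented to \ours\ as a query whose answer path, in some stage, proposes exactly $\beta$. The LLM is treated as an idealized oracle that accepts exactly the true axioms. The idea is that minimality forces every axiom of $\mathcal{S}$ to be a link in a derivation chain for $\alpha$. Such a link has one of the four normal forms, and each form matches either a bridge $X\sqsubseteq C\sqsubseteq Y$ with $C\in\mathcal{C}$ (Stage~3a) or a direct check $X\sqsubseteq Y$ (Stage~3b).

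\textbf{Step 1: every axiom of $\mathcal{S}$ is needed.} By subset-minimality, for each $\beta\in\mathcal{S}$ we have $\Omc'\cup(\mathcal{S}\setminus\{\beta\})\not\models\alpha$. We then use the standard $\mathcal{EL}$ completion calculus (the saturation rules CR1--CR4 for normalized ontologies). Every derivation of $A\sqsubseteq B$ from $\Omc'\cup\mathcal{S}$ must therefore use $\beta$ as a premise of some rule application, and that application derives a subsumption $X\sqsubseteq Y$, with $X,Y$ concepts or concept names of $\Omc'$, lying on the derivation of $\alpha$. Condition~(3) guarantees that the concepts involved are built from the signature of $\Omc'$, so they belong to $\mathcal{C}$.

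\textbf{Step 2: a case split on the normal form of $\beta$.}
\begin{itemize}
\item If $\beta=A_1\sqsubseteq B_1$, the query $A_1\sqsubseteq B_1$ itself reaches Stage~3b. Stages~1--2 fail by entailment-minimality, since otherwise a weaker $\mathcal{S}'$ would exist. The direct check then returns $\beta$.
\item If $\beta=A_1\sqcap A_2\sqsubseteq B_1$ or $\beta=A_1\sqsubseteq\exists r.B_1$, we take the subsumption $X\sqsubseteq Y$ from Step~1 in which $\beta$ is used. Querying it, Stage~3a can choose $C=A_1\sqcap A_2$ or $C=\exists r.B_1$ respectively, both of which lie in $\mathcal{C}$. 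This yields missing axioms $X\sqsubseteq C$ and $C\sqsubseteq Y$, one of which is $\beta$ or has $\beta$ as its relevant component.
\item If $\beta=\exists r.A_1\sqsubseteq B_1$, Stage~3a is run with $C=\exists r.A_1$ symmetrically.
\end{itemize}

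\textbf{Step 3: iteration.} We add each recovered axiom to $\Omc'$ and re-run \ours. The other axioms of $\mathcal{S}$ remain missing, since $\mathcal{S}$ is subset-minimal, and the induction proceeds on $|\mathcal{S}|$.

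\textbf{Main obstacle.} The hardest part is showing that the bridge axioms proposed in Stage~3a coincide with $\beta$ itself rather than with some stronger or weaker variant. I would attack this with the entailment-minimality clause. Suppose a proposed axiom $\gamma$ were strictly weaker than $\beta$ and still completed the derivation. Then $(\mathcal{S}\setminus\{\beta\})\cup\{\gamma\}$ would be a solution entailed by $\mathcal{S}$ but not entailing it, contradicting minimality. Conversely, if $\gamma$ were strictly stronger than $\beta$, then $\gamma$ is not true in the intended model, so the oracle rejects it. I would also check that the intermediate query $X\sqsubseteq Y$ is not already resolved in Stage~1 or Stage~2 by some other axiom; the same minimality argument handles this.
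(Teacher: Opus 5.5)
Your argument never proves what the theorem is about: that the queries you give to \ours\ are themselves produced by the iteration that starts from $\alpha$, where each round is run on a missing axiom output by an earlier round. In Step~2 you query $\beta$ itself in the atomic case, and in the other cases some intermediate subsumption $X\sqsubseteq Y$ of a derivation. Nothing shows that $\beta$ or $X\sqsubseteq Y$ is ever output by a previous round. If arbitrary queries were allowed, the statement would be essentially trivial: submit every $\beta\in\mathcal{S}$ to the direct check of Stage~3b. So this reachability is exactly what needs proof.

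The paper gets it by induction on the length $l(A\sqsubseteq B)$ of a shortest derivation of the current query from $\Omc'\cup\mathcal{S}$ in the calculus $\mathcal{R}_0$--$\mathcal{R}_3$. The last rule dictates the bridge: $A_1$, or $A_1\sqcap\cdots\sqcap A_n$, or $\exists r.B_2$ followed by $\exists r.B_1$ in a second round. Over these one or two rounds, the axioms output are the side axiom and the premises of that rule. This uses splitting $A\sqsubseteq A_1\sqcap\cdots\sqcap A_n$ into the $A\sqsubseteq A_i$, and passing from $\exists r.B_1\sqsubseteq\exists r.B_2$ to $B_1\sqsubseteq B_2$. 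The premises become the next queries. Minimality then splits $\mathcal{S}$ minus the side axiom into minimal solutions $\mathcal{S}_i$ of those premises, so the induction hypothesis applies.

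Your Step~3 does not replace this. Adding recovered axioms to $\Omc'$ and inducting on $|\mathcal{S}|$ is a different protocol, and it still does not say which query the next round is run on. In particular, consider a missing axiom used inside the filler derivation of $B_1\sqsubseteq B_2$ for an $\mathcal{R}_3$ step. Its left-hand side is unrelated to $A$, and it is reachable only through the convention that $\exists r.C\sqsubseteq\exists r.D$ is fed back as $C\sqsubseteq D$. Your proof never invokes that convention.

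Several local steps are also incorrect. First, for $\beta=A_1\sqsubseteq\exists r.B_1$, the rule application using $\beta$ concludes $X\sqsubseteq\exists r.B_1$, so $C=\exists r.B_1$ is a degenerate bridge. What is needed is the atomic bridge $C=A_1$. If you instead take the later $\mathcal{R}_3$ step, bridging with $\exists r.B_1$ outputs $X\sqsubseteq\exists r.B_1$ rather than $\beta$, so a further round is required. That is again the multi-round structure you have not set up.

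Second, in the atomic case Stage~1 does fail, by subset-minimality, but entailment-minimality does not make Stage~2 fail. Suppose $\Omc'\models B'\sqsubseteq B_1$ and $A_1\sqsubseteq B'$ is true. Replacing $\beta$ by $A_1\sqsubseteq B'$ gives a solution that $\mathcal{S}$ need not entail, so there is no contradiction.

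Third, for the same reason your ``main obstacle'' argument is wrong: an axiom strictly stronger than $\beta$ can hold in the intended model, so the oracle need not reject it. The paper sidesteps this by reading ``can recover'' existentially; it assumes \ours\ selects the appropriate bridge in each round. Under that reading the stronger-versus-weaker comparison is not where the difficulty lies; the reachability argument above is.
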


Thus, under the stated conditions, \ours\ can recover any missing normalized axiom that forms part of a minimal solution for deriving the target conclusion. A corresponding case study is provided at the end of the Evaluation section.
}

\section{Dataset Construction}
\label{sec:data}
We construct a dataset from real-world ontologies to evaluate our method under two settings. 

\textbf{Standard Setting} In this case, we use all atomic concepts as the candidates set of bridge concept $C$ 
(i.e, $A\sqsubseteq C\sqsubseteq B$ for given subsumption $A\sqsubseteq B$) of \ours. The dataset is constructed as follows.

First, to simulate ontology incompleteness in a controlled manner, we construct a pruned ontology $\mathcal{O}'$ from the original ontology $\mathcal{O}$ by uniformly sampling and removing $r = 5\%$ of the TBox axioms.
Then, the evaluation dataset with positive and negative samples of subsumptions are constructed as follows: 
\textbf{(1) Positive samples.}  
A positive sample is a subsumption  $A\sqsubseteq B$ such that $\mathcal{O} \models A \sqsubseteq B$ but $\mathcal{O}' \not\models A \sqsubseteq B$. We construct the positive sample set $\mathcal{P} $ by randomly sampling from all positive candidates. That is,
\(
\mathcal{P} \subseteq \mathcal{P}_{all}:= \{(A, B) \mid \mathcal{O} \models A \sqsubseteq B \ \wedge\ \mathcal{O}' \not\models A \sqsubseteq B \}.
\)
These positive samples may involve one or several bridge concepts, corresponding to downward, upward, or bidirectional cases associated with Stages 2a, 2b, and 3a, respectively. In such cases, the induced missing axioms, together with associated justifications for Stages 2a and 2b, serve as the ground-truth explanations. Alternatively, they may involve no bridge and are therefore expected to be resolved in Stage 3b; in this case, the subsumptions themselves serve as the ground truth explanation.
\textbf{(2) Negative samples.}  
Negative samples are subsumptions $A\sqsubseteq B$ such that $\mathcal{O} \not\models A \sqsubseteq B$. To construct a balanced dataset, we sample an equal number of negative instances (i.e, $|\mathcal{N}| = |\mathcal{P}|$) by replacing $B$ with a $B'$ satisfying $\mathcal{O} \not\models A \sqsubseteq B'$, using the following two strategies:
\begin{itemize}[leftmargin=*]
    \item \textbf{Random:} randomly choose one $B'$ from all candidates.
    \item \textbf{Hard:} restrict the candidates $B'$ to those that are closer to $A$, in the sense that there exists a path of length at most $5$ between $A$ and $B'$ in the taxonomy (ignoring the edge direction)\footnote{The taxonomy of $\mathcal{O}$ is a directed acyclic graph over its concepts, with an edge $A \to B$ iff $\mathcal{O} \models A \sqsubseteq B$ and no $B' \neq A,B$ satisfies $\mathcal{O} \models A \sqsubseteq B' \sqsubseteq B$.}. If none exists, then choose $B'$ randomly.
\end{itemize}

\textbf{Complex Setting.}
Under standard settings, \ours{} always returns subsumptions between atomic concepts as the missing axioms. However, in some cases, the missing axiom may involve a complex concept. For example, an ontology $\{A \sqsubseteq A_1,\; B_1 \sqsubseteq B_2,\; \exists r. B_2\sqsubseteq B \}$ may miss plausible axiom $A \sqsubseteq \exists r. B_1$ which allows $A \sqsubseteq B$ to be derived.

To evaluate the performance of \ours{} in such complex cases, we construct incomplete ontologies $\mathcal{O}'_{\exists}$ in which a bridge concept are of the form $\exists r. B_1$ for some true non-entailed subsumptions $A \sqsubseteq B$. That is, $\mathcal{O}_{\exists}' \not\models A \sqsubseteq B$, while in the original ontology $\mathcal{O}$ we have $\mathcal{O} \models A \sqsubseteq \exists r. B_1 \sqsubseteq B$ for some $B_1$, and $r$.

Random removal is not suitable for constructing such incomplete ontologies, as it rarely yields the desired non-entailed subsumptions. Instead, we adopt a targeted pruning strategy. Specifically, we rank axioms containing existential restrictions (\textit{i.e.}, axioms in which $\exists r. B_1$ appears) by the number of direct subsumptions lost upon their removal, and select the most impactful ones to construct the pruned ontology $\mathcal{O}'_{\exists}$. Positive and negative samples are then generated in the same manner as before.

Note that, in this setting, we focus on evaluating the prediction of complex bridge concepts of the form $\exists r. B_1$. Therefore, \ours{} is implemented using only Stages~1 and 3.


\section{Evaluation}
\label{sec:results}

\subsection{Experimental Setting}
\label{sec:setup}

\textbf{Datasets}
We use two real-world ontologies for our evaluation: FoodOn (2022-08-12 version) \cite{dooley2018foodon}, {\color{black}  Gene Ontology (GO$^{+}$; 2024-11-03 version) \cite{ashburner2000gene}}, and Snomed CT (2023-12-24 version) \cite{SNOMED}. 
The datasets are obtained by implementing the data construction procedure on both ontologies under two settings:
(1) In the standard setting where atomic concepts serve as bridge concept candidates, we obtain two datasets, \textbf{FoodOn$_A$},  \textbf{Snomed$_A$}, and \textbf{GO$^{+}_{A}$}. 
(2) For the complex case where we consider relation restriction concepts of the form $\exists r.\, B$ as candidates. This setting is applied only to Snomed CT, producing the dataset \textbf{Snomed$_{\exists}$}. We do not apply this procedure to FoodOn or GO$^+$ as it lacks subsumptions in which existential restrictions serve as bridges.
The statistics of the generated datasets is shown in Table \ref{tab:ontologies} on supplementary meterial.

\textbf{Implementation}
We tested two versions of \ours with OnT~\cite{OnT} 
and pre-trained SBERT~\cite{reimers2019sentence} 
as the embedding model, both using the \texttt{all-MiniLM-L12-v2} variant. While SBERT is used directly, OnT is fine-tuned 
on the provided incomplete ontology $\mathcal{O}'$ for 20 epochs, with a learning rate of $10^{-5}$, a training batch size of 256, a clustering loss margin of 3.0, and a centripetal loss margin of 0.5. 
{\color{black}We use the open-source LLM \texttt{Qwen3.5-9B} in non-thinking mode, which achieves strong performance, fast inference speed, and enables easy reproduction. 
Experiments ran on Ubuntu 24.04 LTS with an AMD Threadripper PRO 7965WX and an NVIDIA RTX A6000.}
 
Without affecting correctness, for a given subsumption $A\sqsubseteq B$, we restrict the candidate set to direct children of $B$  in Stage~2a \footnote{$A$ is a direct children of $B$ in $\mathcal{O}'$ if $\mathcal{O} \models A \sqsubseteq B$ and there is no $A'$ such that $\mathcal{O} \models A \sqsubseteq A' \sqsubseteq B$. Direct parent is defined in the same way.} and direct parents of $A$ in Stage 2b. This restriction reduces the search space and improves prediction accuracy. Results using all parents/children of $A$/$B$ can be found in supplementary material.



\textbf{Baselines}
To the best of our knowledge, no existing methods directly fit our setting. We therefore introduce three simple baselines that assess only the plausibility of given subsumptions and do not generate explanations. 
The baselines are: (1) \textbf{LLM-only}, which uses 
the same prompt (see Figure \ref{fig:prompt_LLM_only} in the supplementary material) as in Stage 3b to evaluate the plausibility of subsumptions; and (2) two embedding-based methods, \textbf{SBERT} and \textbf{OnT}, which classify subsumptions as true or false by assessing whether their similarity (for SBERT) or subsumption (for OnT) score exceeds a threshold.
The threshold is selected to maximize the F1-score (defined below) on a validation set randomly sampled from the test set, containing 50 positive and 50 negative samples. For fair comparison, we use the same LLM and the same SBERT and OnT as in \ours.

As discussed in introduction and related works, existing abductive reasoning approaches are not included in our comparisons, as they differ fundamentally from our setting: they focus on the logical procedure of extracting possible missing axioms from a predefined hypothesis space and treat the given subsumption as an actual observation.

\textbf{Evaluation Metrics} We use the following three metrics:
\begin{itemize}[leftmargin=*]
    \item \textbf{F1}: The standard F1-score that measures the correctness of the True/False prediction, independent of explanations.
    
    \item \textbf{X-F1}: Requires both correct prediction and correct explanation. Specifically, for each positive sample: (1) if a ground-truth bridge concept exists (Stage~2a/2b/3a), the prediction is correct only if the model outputs \texttt{True} and return one of the ground-truth concepts;
     (2) otherwise, the prediction is correct only if the model outputs \texttt{True} at Stage~3b. For each negative sample, the prediction is correct if the model outputs \texttt{False}. 
    \item \textbf{X-F1*}: the same as X-F1 but requires all the output bridge concepts are among the ground-truth concepts in (1) in testing the positive sample.
\end{itemize}
Baseline methods are evaluated only using F1, as they cannot predict missing axioms or provide explanations.





\subsection{Main Results}

\begin{table}[tbp]
\centering
\small
\setlength{\tabcolsep}{1pt} 
\begin{tabular}{lccc}
\toprule
&  \multirow{2}{*}{\textbf{Method}}
& \textbf{Random Neg}
& \textbf{Hard Neg} \\
&
& \text{F1 / X-F1 / X-F1$^*$}
& \text{F1 / X-F1 / X-F1$^*$} \\
\midrule

\multirow{5}{*}{\rotatebox[origin=c]{90}{FoodOn$_A$}}
    & OnT$^*$
    & 0.910 / \NA{} / \NA{}
    & 0.663 / \NA{} / \NA{} \\
    & SBERT
    & 0.712 / \NA{} / \NA{}
    & 0.514 / \NA{} / \NA{} \\
    & LLM-only
    & 0.836 / \NA{} / \NA{}
    & 0.789 / \NA{} / \NA{} \\
\cmidrule(l){2-4}
    & $\ours_{\text{OnT}}^*$
    & \textbf{0.960 / 0.893 / 0.793}
    & \textbf{0.845 / 0.781 / 0.684} \\
    & $\ours_{\text{SBERT}}$
    & \textbf{0.960} / 0.885 / 0.788
    & 0.844 / 0.772 / 0.680 \\
\midrule

\multirow{5}{*}{\rotatebox[origin=c]{90}{Snomed$_A$}}
    & OnT$^*$
    & 0.943 / \NA{} / \NA{}
    & 0.692 / \NA{} / \NA{} \\
    & SBERT
    & 0.927 / \NA{} / \NA{}
    & 0.565 / \NA{} / \NA{} \\
    & LLM-only
    & 0.772 / \NA{} / \NA{}
    & 0.734 / \NA{} / \NA{} \\
\cmidrule(l){2-4}
    & $\ours_{\text{OnT}}^*$
    & 0.968 / \textbf{0.847 / 0.659}
    & \textbf{0.862 / 0.745 / 0.569} \\
    & $\ours_{\text{SBERT}}$
    & \textbf{0.970} / 0.846 / 0.646
    & 0.856 / 0.736 / 0.550 \\
\midrule

\multirow{5}{*}{\rotatebox[origin=c]{90}{{\color{black} GO$^{+}_{A}$}}}
    & {\color{black} OnT$^*$}
    & {\color{black} \textbf{0.935} / \NA{} / \NA{}}
    & {\color{black} 0.729 / \NA{} / \NA{}} \\
    & {\color{black} SBERT}
    & {\color{black} 0.725 / \NA{} / \NA{}}
    & {\color{black} 0.645 / \NA{} / \NA{}} \\
    & {\color{black} LLM-only}
    & {\color{black} 0.614 / \NA{} / \NA{}}
    & {\color{black} 0.596 / \NA{} / \NA{}} \\
\cmidrule(l){2-4}
    & {\color{black} $\ours_{\text{OnT}}^*$}
    & {\color{black} 0.825 / \textbf{0.700} / 0.540}
    & {\color{black} \textbf{0.766} / \textbf{0.643} / 0.492} \\
    & {\color{black} $\ours_{\text{SBERT}}$}
    & {\color{black} 0.824 / 0.678 / \textbf{0.562}}
    & {\color{black} 0.757 / 0.617 / \textbf{0.507}} \\
\midrule

\multirow{5}{*}{\rotatebox[origin=c]{90}{Snomed$_\exists$}}
    & OnT$^*$
    & 0.897 / \NA{} / \NA{}
    & 0.612 / \NA{} / \NA{} \\
    & SBERT
    & \textbf{0.927} / \NA{} / \NA{}
    & 0.623 / \NA{} / \NA{} \\
    & LLM-only
    & 0.881 / \NA{} / \NA{}
    & 0.854 / \NA{} / \NA{} \\
\cmidrule(l){2-4}
    & $\ours_{\text{OnT}}^*$
    & 0.924 / \textbf{0.497 / 0.193}
    & \textbf{0.872 / 0.460 / 0.176} \\
    & $\ours_{\text{SBERT}}$
    & 0.906 / 0.002 / 0.000
    & 0.862 / 0.002 / 0.000 \\
\bottomrule
\end{tabular}
\caption{%
Overall results.
A superscript $*$ indicates fine-tuned (values are reported as F1 / X-F1 / X-F1$^*$).
}
\label{tab:all_results}
\end{table}

\subsubsection{Summary}
The overall results are presented in Table~\ref{tab:all_results}. {\color{black}We observe that both variants of \ours achieve the best overall performance among all baselines in terms of the standard F1 score, with particularly consistent improvements in the hard-negative setting. For example, on Snomed$_A$ under the hard-negative setting, $\ours_{\text{OnT}}^*$ achieves an F1 score of 0.862, compared with 0.734 for the strongest baseline.}
This improvement can be attributed to the structure of \ours, which enables it to effectively leverage information from the existing ontology (\textit{i.e.}, Stages 2 and 3), and thus leads to better performance in determining the plausibility of given subsumptions. 
{\color{black}There are only two exceptions in the random-negative setting: OnT or SBERT outperforms \ours on GO$^{+}_{A}$ and Snomed$_\exists$ in terms of F1. For GO$^{+}_{A}$, this may be due to its greater semantic complexity (\textit{e.g.}, chemical terminologies), which makes it harder for the LLMs to assess answer correctness. For Snomed$_\exists$, the extremely large candidate space for $\exists r.B_1$ (over 6{,}000{,}000 combinations from 20 roles and more than 300{,}000 concepts) could also reduce performance. 
Nevertheless, \ours performs best on Snomed$_\exists$ under the more challenging hard-negative setting, demonstrating its effectiveness.}

We also find that fine-tuning on the target ontology leads improvements in most of cases. 
The performance gap is relatively small on simpler datasets such as FoodOn$_A$ and Snomed$_A$, where only atomic concepts are considered as candidates. In contrast, for more complex datasets like Snomed$_\exists$, which involve existential concepts of the form $\exists r.B$, $\ours_{\text{OnT}}$ shows significantly better performance due to its adaptation to the underlying logical structure through fine-tuning. This suggests that, in relatively simple default settings, it is sufficient to implement \ours in a training-free manner using the SBERT model. However, when considering more complex cases involving complex concepts, it is essential to use fine-tuned models such as OnT. 
{\color{black} On GO$^{+}_{A}$, the X-F1* score of $\ours_{\text{SBERT}}$ is slightly lower than that of $\ours_{\text{OnT}}$, whereas its X-F1 score is higher. This further suggests that LLMs have limited capacity on this dataset, as they cannot reliably distinguish incorrect bridging concepts, thereby reducing the X-F1* score.}

\begin{table}[t]
\centering
\small
\setlength{\tabcolsep}{1.5pt} 
\begin{tabular}{@{}c l c c@{}}
\toprule
& \multirow{2}{*}{\textbf{Variant}}
& \multirow{2}{*}{\textbf{Stage}}
& \textbf{Ranking Metrics} \\
& & &
\text{H@1 / H@5 / H@10 / H@100 / MRR} \\
\midrule

\multirow{6}{*}{\rotatebox[origin=c]{90}{FoodOn$_A$}}
& \multirow{3}{*}{$\ours_{\text{OnT}}^*$}
& 2a
& 0.546 / 0.964 / 0.996 / 1.000 / 0.744 \\
&
& 2b
& 0.920 / 1.000 / 1.000 / 1.000 / 0.960 \\
&
& 3a
& 0.429 / 0.714 / 0.714 / 0.857 / 0.538 \\

\cmidrule(l){2-4}

& \multirow{3}{*}{$\ours_{\text{SBERT}}$}
& 2a
& 0.483 / 0.892 / 0.940 / 1.000 / 0.676 \\
&
& 2b
& 0.932 / 1.000 / 1.000 / 1.000 / 0.965 \\
&
& 3a
& 0.286 / 0.857 / 0.857 / 0.857 / 0.476 \\








\midrule

\multirow{2}{*}{\rotatebox[origin=c]{90}{Snd$_\exists$}}
& $\ours_{\text{OnT}}^*$
& 3a
& 0.001 / 0.052 / 0.122 / 0.577 / 0.042 \\
& $\ours_{\text{SBERT}}$
& 3a
& 0.000 / 0.000 / 0.000 / 0.001 / 0.000 \\

\bottomrule
\end{tabular}
\caption{%
Part~1 candidate-ranking results (Snd$_\exists$: Snomed$_\exists$; H@k: the percentage of ground-truth candidates ranked within the top $k$; MRR: mean reciprocal rank).
}
\label{tab:part1_detailed}
\end{table}

\subsection{Detailed Analysis}

\subsubsection{Retrieval performance}
The retrieval results are shown in Table~\ref{tab:part1_detailed}. We categories the positive samples into different groups based on whether they have downward/upward bridge concepts in Stage~2a/2b (may overlap) or only bidirectional bridge concepts in Stage~3a (distinct from above). 

We observe that the retrieval performance of Stage~2a/2b is substantially higher than that of Stage~3a alone. This is reasonable, as the number of candidates in Stage~2a/2b (\textit{i.e.}, direct children/parents of $B/A$) is significantly smaller than the space of all atomic concepts. Furthermore, Stage~2b achieves better performance than Stage~2a due to the smaller number of direct parents compared to direct children (\textit{e.g.}, 1.0 vs. 8.6 in FoodOn$_A$).

Moreover, due to the large candidate space of relational restriction concepts $\exists r. B_1$, the retrieval performance on the Snomed$_\exists$ dataset is considerably lower than on the other datasets. By fine-tuning on given incomplete ontologies, OnT achieves much better performance than SBERT; however, there remains a large gap compared to the default case where only atomic concepts are considered as the bridge.

\textbf{Ablation study}
Table~\ref{tab:stage_ablation} reports the performance of different subsets of stages on FoodOn$_A$.
We find that using Stage 3a alone results in worse performance than using either Stage 2a or Stage 2b alone. On the random-negative samples, the X-F1* score decreases by 46.7\% relative to Stage 2a and by 47.6\% relative to Stage 2b. These results highlight the benefit of incorporating ontology information in Stages 2a and 2b. Moreover, applying only LLM-based verification (\textit{i.e.}, Stage 3b) after Stage 3a yields no substantial improvement and even decreases both X-F1 and X-F1* under the hard-negative setting. In contrast, the full pipeline substantially improves performance, increasing the random-negative X-F1* score from 0.367 to 0.793 compared with using Stage 3a alone. These results demonstrate the effectiveness of the full pipeline.

\begin{table}[t]
\centering
\newcommand{\StageOn}{\makebox[1.2em][c]{$\checkmark$}}
\newcommand{\StageOff}{\makebox[1.2em][c]{--}}
\small
\setlength{\tabcolsep}{1pt} 
\begin{tabular}{c c cc}
\toprule
& \textbf{Stages}
& \textbf{Random Neg}
& \textbf{Hard Neg} \\
& \text{2a / 2b / 3a / 3b}
& \text{F1 / X-F1 / X-F1$^*$}
& \text{F1 / X-F1 / X-F1$^*$} \\
\midrule

\multirow{7}{*}{\rotatebox[origin=c]{90}{FoodOn$_A$}}
& \StageOn{} / \StageOff{} / \StageOff{} / \StageOff{}
& 0.823 / 0.766 / 0.689
& 0.789 / 0.733 / 0.656 \\

& \StageOff{} / \StageOn{} / \StageOff{} / \StageOff{}
& 0.832 / 0.811 / 0.701
& 0.771 / 0.751 / 0.645 \\

& \StageOff{} / \StageOff{} / \StageOn{} / \StageOff{}
& 0.779 / 0.650 / 0.367
& 0.723 / 0.600 / 0.334 \\

\cmidrule(l){2-4}

& \StageOn{} / \StageOff{} / \StageOff{} / \StageOn{}
& 0.941 / 0.773 / 0.698
& 0.870 / 0.708 / 0.634 \\

& \StageOff{} / \StageOn{} / \StageOff{} / \StageOn{}
& 0.917 / 0.818 / 0.710
& 0.837 / 0.740 / 0.637 \\

& \StageOff{} / \StageOff{} / \StageOn{} / \StageOn{}
& 0.900 / 0.656 / 0.378
& 0.821 / 0.588 / 0.332 \\

\cmidrule(l){2-4}

& \hl{\StageOn{} / \StageOn{} / \StageOn{} / \StageOn{}}
& \hl{\textbf{0.960 / 0.893 / 0.793}}
& \hl{\textbf{0.845 / 0.781 / 0.684}} \\














\bottomrule
\end{tabular}
\caption{%
Stage contribution ablation on FoodOn$_A$.
``$\checkmark$'' indicates that a stage is applied. 
The full pipeline is shaded.
}
\label{tab:stage_ablation}
\end{table}

\subsection{Case Studies}\label{sec:case_study}

\textbf{Single-iteration case}  
We present two illustrative cases: one true positive (TP) and one false positive (FP). Assuming the given subsumption is always $A\sqsubseteq B$.

\begin{enumerate}[leftmargin=*]
    \item \textbf{(TP)} In Snomed$_A$, in the case $A = $ {\ttfamily   Irritant contact \seqsplit{blepharoconjunctivitis}} and $B = $ {\ttfamily   Disorder of soft tissue}. \ours correctly identifies the subsumption $A \sqsubseteq B$ by discovering the bridge concept $C=$ {\ttfamily   Dermatitis of eyelid} in Stage~2b. The LLM-only baseline incorrectly classifies this subsumption as false, likely due to the large semantic gap between $A$ and $B$.
    
    \item \textbf{(FP)} In FoodOn$_A$, in the case $A =$ {\ttfamily   Grammatorcynus} and $B = $ {\ttfamily   swine food product}. \ours incorrectly identifies $C = $ {\ttfamily   vertebrate animal food product} as a valid bridge concept in Stage~3a. This is erroneous, as the subsumption $C \sqsubseteq B$ does not hold. 
\end{enumerate}

\textbf{Multi-iteration case}
Here, we consider iteratively applying \ours, where the output missing subsumption in one iteration is used as the target subsumption in the next iteration.
On Snomed$_A$, assume the initial input is  $A\sqsubseteq B$ with
\(A=\) \texttt{ Primary undifferentiated carcinoma of anterior wall of nasopharynx} and  
\(B =\) {\ttfamily Disorder of nasopharynx}, and \ours is performed by four iterations:  
\begin{itemize}[leftmargin=*]
    \item \textbf{Iteration 1}: For input $A\sqsubseteq B$, \ours outputs \(C_1 =\) {\ttfamily Lesion of nasopharynx} in Stage~2a, together with the new missing axiom \(A \sqsubseteq C_1\). 
    
    \item \textbf{Iteration 2}: We input the missing axiom $A\sqsubseteq C_1$ obtained in Iteration 1 to \ours again, and it produces  
    \(C_2 =\) {\ttfamily Neoplasm of anterior wall of nasopharynx} in Stage~2a, along with the new missing axiom \(A \sqsubseteq C_2\).
    
    \item \textbf{Iteration 3}: With the input \(A \sqsubseteq C_2\), \ours  produces  
    \(C_3 =\) {\ttfamily Malignant tumour of anterior wall of \seqsplit{nasopharynx}} in Stage~2a, together with the new missing axiom \(A \sqsubseteq C_3\).
    
    \item \textbf{Iteration 4}: With the input \(A \sqsubseteq C_3\), \ours produces  
    \(C_4 =\) {\ttfamily Primary malignant neoplasm of anterior wall of nasopharynx}  in Stage~3a, along with two new missing axioms \(A \sqsubseteq C_4\) and \(C_4 \sqsubseteq C_3\).
\end{itemize}
In summary, the four iterations of \ours  produces the following subsumption chain:
\(
A \sqsubseteq C_4 \sqsubseteq C_3 \sqsubseteq C_2 \sqsubseteq C_1 \sqsubseteq B,
\)
where all the identified missing axioms are correct and exhibit progressively finer granularity. 

\section{Conclusion}

In this work, we proposed \ours, an end-to-end framework for explainable neural-symbolic reasoning over incomplete OWL ontologies. By integrating the logical structure and textual information of the ontology via embeddings and LLM, \ours can infer the plausibility of a concept subsumption axiom and in addition can return a set of axioms as its justification, preferring stated axioms in the ontology when possible with up to two predicted missing axioms.
Comprehensive experiments and case studies on three datasets based on real-world ontologies have demonstrated an overall good performance of \ours and the effectiveness of its modules. 
In future work, we aim to extend \ours to support more complex concepts with logical operators of $\forall$ and $\neg$ beyond $\exists$ considered in this work.
We also would like to implement \ours as a reasoner compatible to Protégé.

\bibliography{aaai2027}

@inproceedings{DBLP:conf/www/JackermeierC024,
  author       = {Mathias Jackermeier and
                  Jiaoyan Chen and
                  Ian Horrocks},
  editor       = {Tat{-}Seng Chua and
                  Chong{-}Wah Ngo and
                  Ravi Kumar and
                  Hady W. Lauw and
                  Roy Ka{-}Wei Lee},
  title        = {Dual Box Embeddings for the Description Logic {EL}\({}^{\mbox{++}}\)},
  booktitle    = {Proceedings of the {ACM} on Web Conference 2024, {WWW} 2024, Singapore,
                  May 13-17, 2024},
  pages        = {2250--2258},
  publisher    = {{ACM}},
  year         = {2024},
  url          = {https://doi.org/10.1145/3589334.3645648},
  doi          = {10.1145/3589334.3645648},
  bibsource    = {dblp computer science bibliography, https://dblp.org}
}

@inproceedings{xiong_faithiful_2022,
  title={Faithful embeddings for EL++ knowledge bases},
  author={Xiong, Bo and Potyka, Nico and Tran, Trung-Kien and Nayyeri, Mojtaba and Staab, Steffen},
  booktitle={International semantic web conference},
  pages={22--38},
  year={2022},
  organization={Springer}
}

@inproceedings{kulmanov_embeddings_2019,
	location = {Macao, China},
	title = {{EL} Embeddings: Geometric Construction of Models for the Description Logic {EL}++},
	isbn = {978-0-9992411-4-1},
	url = {https://www.ijcai.org/proceedings/2019/845},
	doi = {10.24963/ijcai.2019/845},
	shorttitle = {{EL} Embeddings},
	eventtitle = {Twenty-Eighth International Joint Conference on Artificial Intelligence \{{IJCAI}-19\}},
	pages = {6103--6109},
	booktitle = {Proceedings of the Twenty-Eighth International Joint Conference on Artificial Intelligence},
	publisher = {International Joint Conferences on Artificial Intelligence Organization},
	author = {Kulmanov, Maxat and Liu-Wei, Wang and Yan, Yuan and Hoehndorf, Robert},
	urldate = {2024-09-09},
	date = {2019-08},
	langid = {english},
      year={2019},
}

@article{DBLP:journals/bioinformatics/SmailiGH19,
  author       = {Fatima Zohra Smaili and
                  Xin Gao and
                  Robert Hoehndorf},
  title        = {OPA2Vec: combining formal and informal content of biomedical ontologies
                  to improve similarity-based prediction},
  journal      = {Bioinform.},
  volume       = {35},
  number       = {12},
  pages        = {2133--2140},
  year         = {2019},
  url          = {https://doi.org/10.1093/bioinformatics/bty933},
  doi          = {10.1093/BIOINFORMATICS/BTY933},
  bibsource    = {dblp computer science bibliography, https://dblp.org}
}

@article{DBLP:journals/ml/ChenHJHAH21,
  author       = {Jiaoyan Chen and
                  Pan Hu and
                  Ernesto Jim{\'{e}}nez{-}Ruiz and
                  Ole Magnus Holter and
                  Denvar Antonyrajah and
                  Ian Horrocks},
  title        = {OWL2Vec*: embedding of {OWL} ontologies},
  journal      = {Mach. Learn.},
  volume       = {110},
  number       = {7},
  pages        = {1813--1845},
  year         = {2021},
  url          = {https://doi.org/10.1007/s10994-021-05997-6},
  doi          = {10.1007/S10994-021-05997-6},
  bibsource    = {dblp computer science bibliography, https://dblp.org}
}

@article{ashburner2000gene,
  title={Gene ontology: tool for the unification of biology},
  author={Ashburner, Michael and Ball, Catherine A and Blake, Judith A and Botstein, David and Butler, Heather and Cherry, J Michael and Davis, Allan P and Dolinski, Kara and Dwight, Selina S and Eppig, Janan T and others},
  journal={Nature genetics},
  volume={25},
  number={1},
  pages={25--29},
  year={2000},
  publisher={Nature Publishing Group}
}

@article{SNOMED,
  title={{SNOMED}-{CT}: The advanced terminology and coding system for eHealth},
  author={Bos, L and Donnelly, K},
  journal={Stud Health Technol Inform},
  volume={121},
  pages={279--290},
  year={2006}
}

@inproceedings{yang2025transbox,
  title={TransBox: \({EL}^{\mbox{++}}\)-closed Ontology Embedding},
  author={Yang, Hui and Chen, Jiaoyan and Sattler, Uli},
  booktitle={THE WEB CONFERENCE 2025},
  year = {2025},
}

@inproceedings{ontolama,
  author       = {Yuan He and
                  Jiaoyan Chen and
                  Ernesto Jim{\'{e}}nez{-}Ruiz and
                  Hang Dong and
                  Ian Horrocks},
  editor       = {Anna Rogers and
                  Jordan L. Boyd{-}Graber and
                  Naoaki Okazaki},
  title        = {Language Model Analysis for Ontology Subsumption Inference},
  booktitle    = {Findings of the Association for Computational Linguistics: {ACL} 2023,
                  Toronto, Canada, July 9-14, 2023},
  pages        = {3439--3453},
  publisher    = {Association for Computational Linguistics},
  year         = {2023},
  url          = {https://doi.org/10.18653/v1/2023.findings-acl.213},
  doi          = {10.18653/V1/2023.FINDINGS-ACL.213},
  bibsource    = {dblp computer science bibliography, https://dblp.org}
}

@inproceedings{hit,
  author       = {Yuan He and
                  Moy Yuan and
                  Jiaoyan Chen and
                  Ian Horrocks},
  editor       = {Amir Globersons and
                  Lester Mackey and
                  Danielle Belgrave and
                  Angela Fan and
                  Ulrich Paquet and
                  Jakub M. Tomczak and
                  Cheng Zhang},
  title        = {Language Models as Hierarchy Encoders},
  booktitle    = {Advances in Neural Information Processing Systems 38: Annual Conference
                  on Neural Information Processing Systems 2024, NeurIPS 2024, Vancouver,
                  BC, Canada, December 10 - 15, 2024},
  year         = {2024},
  url          = {http://papers.nips.cc/paper\_files/paper/2024/hash/1a970a3e62ac31c76ec3cea3a9f68fdf-Abstract-Conference.html},
  bibsource    = {dblp computer science bibliography, https://dblp.org}
}

@inproceedings{reimers2019sentence,
  title={Sentence-BERT: Sentence Embeddings using Siamese BERT-Networks},
  author={Reimers, Nils and Gurevych, Iryna},
  booktitle={Proceedings of the 2019 Conference on Empirical Methods in Natural Language Processing},
  pages={3982--3992},
  year={2019}
}

@article{chen2024ontology,
  title={Ontology embedding: a survey of methods, applications and resources},
  author={Chen, Jiaoyan and Mashkova, Olga and Zhapa-Camacho, Fernando and Hoehndorf, Robert and He, Yuan and Horrocks, Ian},
  journal={arXiv preprint arXiv:2406.10964},
  year={2024}
}

@article{kulmanov2021semantic,
  title={Semantic similarity and machine learning with ontologies},
  author={Kulmanov, Maxat and Smaili, Fatima Zohra and Gao, Xin and Hoehndorf, Robert},
  journal={Briefings in bioinformatics},
  volume={22},
  number={4},
  pages={bbaa199},
  year={2021},
  publisher={Oxford University Press}
}

@article{dooley2018foodon,
  title={FoodOn: a harmonized food ontology to increase global food traceability, quality control and data integration},
  author={Dooley, Damion M and Griffiths, Emma J and Gosal, Gurinder S and Buttigieg, Pier L and Hoehndorf, Robert and Lange, Matthew C and Schriml, Lynn M and Brinkman, Fiona SL and Hsiao, William WL},
  journal={NPJ Science of Food},
  volume={2},
  number={1},
  pages={23},
  year={2018},
  publisher={Nature Publishing Group UK London}
}

@inproceedings{DBLP:conf/owled/ElsenbroichKS06,
  author       = {Corinna Elsenbroich and
                  Oliver Kutz and
                  Ulrike Sattler},
  editor       = {Bernardo Cuenca Grau and
                  Pascal Hitzler and
                  Conor Shankey and
                  Evan Wallace},
  title        = {A Case for Abductive Reasoning over Ontologies},
  booktitle    = {Proceedings of the OWLED*06 Workshop on {OWL:} Experiences and Directions,
                  Athens, Georgia, USA, November 10-11, 2006},
  series       = {{CEUR} Workshop Proceedings},
  publisher    = {CEUR-WS.org},
  year         = {2006},
  url          = {https://ceur-ws.org/Vol-216/submission\_25.pdf},
  bibsource    = {dblp computer science bibliography, https://dblp.org}
}

@inproceedings{OnT,
  title={Language models as ontology encoders},
  author={Yang, Hui and Chen, Jiaoyan and He, Yuan and Gao, Yongsheng and Horrocks, Ian},
  booktitle={International Semantic Web Conference},
  pages={443--461},
  year={2025},
  organization={Springer}
}

@article{penaloza2020axiom,
  title={Axiom Pinpointing.},
  author={Pe{\~n}aloza, Rafael},
  journal={Applications and Practices in Ontology Design, Extraction, and Reasoning},
  volume={49},
  pages={162--177},
  year={2020}
}

@article{glimm2014hermit,
  title={{HermiT}: an {OWL} 2 reasoner},
  author={Glimm, Birte and Horrocks, Ian and Motik, Boris and Stoilos, Giorgos and Wang, Zhe},
  journal={Journal of automated reasoning},
  volume={53},
  number={3},
  pages={245--269},
  year={2014},
  publisher={Springer}
}

@inproceedings{haifani2022connection,
  title={Connection-minimal abduction in {EL} via translation to {FOL}},
  author={Haifani, Fajar and Koopmann, Patrick and Tourret, Sophie and Weidenbach, Christoph},
  booktitle={International Joint Conference on Automated Reasoning},
  pages={188--207},
  year={2022},
  organization={Springer}
}

@inproceedings{wei2014abduction,
  title={Abduction framework for repairing incomplete EL ontologies: Complexity results and algorithms},
  author={Wei-Kleiner, Fang and Dragisic, Zlatan and Lambrix, Patrick},
  booktitle={Proceedings of the AAAI conference on artificial intelligence},
  volume={28},
  year={2014}
}

@inproceedings{du2017practical,
  title={Practical {TBox} abduction based on justification patterns},
  author={Du, Jianfeng and Wan, Hai and Ma, Huaguan},
  booktitle={Proceedings of the AAAI Conference on Artificial Intelligence},
  volume={31},
  year={2017}
}

@inproceedings{bienvenu2008complexity,
  title={Complexity of Abduction in the EL Family of Lightweight Description Logics.},
  author={Bienvenu, Meghyn},
  booktitle={KR},
  pages={220--230},
  year={2008}
}

@inproceedings{koopmann2020signature,
  title={Signature-based abduction for expressive description logics},
  author={Koopmann, Patrick and Del-Pinto, Warren and Tourret, Sophie and Schmidt, Renate A},
  booktitle={Proceedings of the International Conference on Principles of Knowledge Representation and Reasoning},
  volume={17},
  pages={592--602},
  year={2020}
}

@inproceedings{del2019abox,
  title={{ABox} abduction via forgetting in {ALC}},
  author={Del-Pinto, Warren and Schmidt, Renate A},
  booktitle={Proceedings of the AAAI Conference on Artificial Intelligence},
  volume={33},
  pages={2768--2775},
  year={2019}
}

@article{du2012towards,
  title={Towards practical {AB}ox abduction in large description logic ontologies},
  author={Du, Jianfeng and Qi, Guilin and Shen, Yi-Dong and Pan, Jeff Z},
  journal={International Journal on Semantic Web and Information Systems (IJSWIS)},
  volume={8},
  number={2},
  pages={1--33},
  year={2012},
  publisher={IGI Global Scientific Publishing}
}

@inproceedings{du2014tractable,
  title={A tractable approach to ABox abduction over description logic ontologies},
  author={Du, Jianfeng and Wang, Kewen and Shen, Yi-Dong},
  booktitle={Proceedings of the AAAI conference on artificial intelligence},
  volume={28},
  year={2014}
}

@inproceedings{halland2012abox,
  title={ABox abduction in {ALC} using a {DL} tableau},
  author={Halland, Ken and Britz, Katarina},
  booktitle={Proceedings of the South African institute for computer scientists and information technologists conference},
  pages={51--58},
  year={2012}
}

@article{klarman2011abox,
  title={ABox abduction in the Description Logic},
  author={Klarman, Szymon and Endriss, Ulle and Schlobach, Stefan},
  journal={Journal of Automated Reasoning},
  volume={46},
  number={1},
  pages={43--80},
  year={2011},
  publisher={Springer}
}

@inproceedings{koopmann2021signature,
  title={Signature-based abduction with fresh individuals and complex concepts for description logics},
  author={Koopmann, Patrick},
  booktitle={Description Logics},
  year={2021}
}

@article{pukancova2020aaa,
  title={The AAA ABox abduction solver: System description},
  author={Pukancov{\'a}, J{\'u}lia and Homola, Martin},
  journal={KI-K{\"u}nstliche Intelligenz},
  volume={34},
  number={4},
  pages={517--522},
  year={2020},
  publisher={Springer}
}

@inproceedings{kazakov2012elk,
  title={{ELK} reasoner: architecture and evaluation.},
  author={Kazakov, Yevgeny and Kr{\"o}tzsch, Markus and Simancik, Frantisek},
  booktitle={ORE},
  year={2012}
}

@inproceedings{PULi,
  title={Enumerating justifications using resolution},
  author={Kazakov, Yevgeny and Sko{\v{c}}ovsk{\`y}, Peter},
  booktitle={International Joint Conference on Automated Reasoning},
  pages={609--626},
  year={2018},
  organization={Springer}
}

@inproceedings{yang2022hypergraph,
  title={Hypergraph-Based Inference Rules for Computing EL+-Ontology Justifications},
  author={Yang, Hui and Ma, Yue and Bidoit, Nicole},
  booktitle={International Joint Conference on Automated Reasoning},
  pages={310--328},
  year={2022},
  organization={Springer}
}

@article{haak2025not,
  title={Why not? {D}eveloping {ABox} Abduction beyond Repairs},
  author={Haak, Anselm and Koopmann, Patrick and Mahmood, Yasir and Turhan, Anni-Yasmin},
  journal={arXiv preprint arXiv:2507.21955},
  year={2025}
}

@book{DBLP:books/daglib/0041477,
  author       = {Franz Baader and
                  Ian Horrocks and
                  Carsten Lutz and
                  Ulrike Sattler},
  title        = {An Introduction to Description Logic},
  publisher    = {Cambridge University Press},
  year         = {2017},
  isbn         = {978-0-521-69542-8},
}

@inproceedings{alrabbaa2024explaining,
  author       = {Christian Alrabbaa and
                  Stefan Borgwardt and
                  Tom Friese and
                  Anke Hirsch and
                  Nina Knieriemen and
                  Patrick Koopmann and
                  Alisa Kovtunova and
                  Antonio Kr{\"{u}}ger and
                  Alexej Popovic and
                  Ida S. R. Siahaan},
  editor       = {Pierre Marquis and
                  Magdalena Ortiz and
                  Maurice Pagnucco},
  title        = {Explaining Reasoning Results for {OWL} Ontologies with Evee},
  booktitle    = {Proceedings of the 21st International Conference on Principles of
                  Knowledge Representation and Reasoning, {KR} 2024, Hanoi, Vietnam.
                  November 2-8, 2024},
  year         = {2024},
  url          = {https://doi.org/10.24963/kr.2024/67},
  doi          = {10.24963/KR.2024/67},
  bibsource    = {dblp computer science bibliography, https://dblp.org}
}

@book{baader2003description,
  title={The Description Logic handbook: Theory, Implementation and Applications},
  author={Baader, Franz},
  year={2003},
  publisher={Cambridge university press}
}

@inproceedings{shi2024taxonomy,
  title={Taxonomy completion via implicit concept insertion},
  author={Shi, Jingchuan and Dong, Hang and Chen, Jiaoyan and Wu, Zhe and Horrocks, Ian},
  booktitle={Proceedings of the ACM Web Conference 2024},
  pages={2159--2169},
  year={2024}
}

@inproceedings{babaei2023llms4ol,
  title={{LLMs4OL}: Large language models for ontology learning},
  author={Babaei Giglou, Hamed and D’Souza, Jennifer and Auer, S{\"o}ren},
  booktitle={International semantic web conference},
  pages={408--427},
  year={2023},
  organization={Springer}
}

@article{zhang2025ontourl,
  title={{Ontourl}: A benchmark for evaluating large language models on symbolic ontological understanding, reasoning and learning},
  author={Zhang, Xiao and Lai, Huiyuan and Meng, Qianru and Bos, Johan},
  journal={arXiv preprint arXiv:2505.11031},
  year={2025}
}

@article{lo2024end,
  title={End-to-end ontology learning with large language models},
  author={Lo, Andy and Jiang, Albert Q and Li, Wenda and Jamnik, Mateja},
  journal={Advances in Neural Information Processing Systems},
  volume={37},
  pages={87184--87225},
  year={2024}
}

@article{chen2023contextual,
  title={Contextual semantic embeddings for ontology subsumption prediction},
  author={Chen, Jiaoyan and He, Yuan and Geng, Yuxia and Jim{\'e}nez-Ruiz, Ernesto and Dong, Hang and Horrocks, Ian},
  journal={World Wide Web},
  volume={26},
  number={5},
  pages={2569--2591},
  year={2023},
  publisher={Springer}
}

@inproceedings{huiWWW26,
author = {Yang, Hui and Chen, Jiaoyan and Sattler, Uli},
title = {Large Language Model for {OWL} Proofs},
year = {2026},
isbn = {9798400723070},
publisher = {Association for Computing Machinery},
address = {New York, NY, USA},
url = {https://doi.org/10.1145/3774904.3792395},
doi = {10.1145/3774904.3792395},
booktitle = {Proceedings of the ACM Web Conference 2026},
pages = {3952–3963},
numpages = {12},
location = {United Arab Emirates},
series = {WWW '26}
}

@article{DBLP:journals/pvldb/SunHSXYDTC24,
  author       = {Yushi Sun and
                  Xin Hao and
                  Kai Sun and
                  Yifan Xu and
                  Xiao Yang and
                  Xin Luna Dong and
                  Nan Tang and
                  Lei Chen},
  title        = {Are Large Language Models a Good Replacement of Taxonomies?},
  journal      = {Proc. {VLDB} Endow.},
  volume       = {17},
  number       = {11},
  pages        = {2919--2932},
  year         = {2024},
  url          = {https://www.vldb.org/pvldb/vol17/p2919-sun.pdf},
  doi          = {10.14778/3681954.3681973},
  bibsource    = {dblp computer science bibliography, https://dblp.org}
}

@article{zhao2026subsumption,
  title={From Subsumption to Satisfiability: LLM-Assisted Active Learning for OWL Ontologies},
  author={Zhao, Haoruo and Tang, Wenshuo and Guthrie, Duncan and Sevegnani, Michele and Flynn, David and Harvey, Paul},
  journal={arXiv preprint arXiv:2604.16672},
  year={2026}
}

@inproceedings{beacon,
  title={BEACON: An Efficient SAT-Based Tool for Debugging \(\mathcal{EL}\)\({}^{\mbox{+}}\) Ontologies},
  author={Arif, M Fareed and Menc{\'\i}a, Carlos and Ignatiev, Alexey and Manthey, Norbert and Pe{\~n}aloza, Rafael and Marques-Silva, Joao},
  booktitle={International Conference on Theory and Applications of Satisfiability Testing},
  pages={521--530},
  year={2016},
  organization={Springer}
}

@inproceedings{baader2005pushing,
  title={Pushing the {EL} envelope},
  author={Baader, Franz and Brandt, Sebastian and Lutz, Carsten},
  booktitle={IJCAI},
  volume={5},
  pages={364--369},
  year={2005}
}


\newpage
\appendix

\section{Poof of Theorem \ref{theo_complete}}

\subsection{Key Points for Iterative Applications of \ours}
Before going into the details of the proof, it is worth first noting the following three key points about the iterative application of \ours:
\begin{enumerate}
 \item When the input axiom contains complex concepts,
    for example, when it is of the form $C\sqsubseteq D$, we
    can still apply \ours. In Stage~2, we find atomic
    concepts $A$ such that
    $$
    \Omc'\models C\sqsubseteq A
    \qquad or \qquad
    \Omc'\models A\sqsubseteq D.
    $$
    Similarly, in Stage~3, we can check every concept
    $C'$ in the given candidates set $\mathcal{C}$ to determine whether
    $$
    C\sqsubseteq C'\sqsubseteq D
    $$
    holds.
 \item If we obtain an intermediate result of the form
    $$
    C\sqsubseteq D_1\sqcap\cdots\sqcap D_n,
    $$
    then, in the next iteration, we use the $n$ different axioms
    $$
    C\sqsubseteq D_1,\ldots,C\sqsubseteq D_n
    $$
    as inputs, rather than using the single conjunctive axiom directly.
\item If we obtain the intermediate result of the form 
$$\exists r. C\sqsubseteq \exists r. D$$
then, in the next iteration, we will input both $\exists r. C\sqsubseteq \exists r. D$ and $C\sqsubseteq D$. 
Specifically, in the normalized $\mathcal{EL}$ case, it suffices to input only $C \sqsubseteq D$, as shown in Case~(b) of the induction step in the following proof.

\end{enumerate}

\begin{table}[t]
    \centering
    \begin{align*}
        &\mathcal{R}_0:~
        \frac{\emptyset}{A\sqsubseteq A}
        \\[0.7mm]
        &\mathcal{R}_1:~
        \frac{
            A\sqsubseteq A_1,\ldots,A\sqsubseteq A_n
        }{
            A\sqsubseteq B
        }
        :
        A_1\sqcap\cdots\sqcap A_n\sqsubseteq B\in\Omc_1
        \\[0.7mm]
        &\mathcal{R}_2:~
        \frac{
            A\sqsubseteq A_1
        }{
            A\sqsubseteq\exists r.B
        }
        :
        A_1\sqsubseteq\exists r.B\in\Omc_1
        \\[0.7mm]
        &\mathcal{R}_3:~
        \frac{
            A\sqsubseteq\exists r.B_1,\quad
            B_1\sqsubseteq B_2
        }{
            A\sqsubseteq B
        }
        :
        \exists r.B_2\sqsubseteq B\in\Omc_1.
    \end{align*}
    \vspace*{-4mm}
    \caption{Inference rules over the $\mathcal{EL}$ ontology $\Omc_1$.}
    \label{composition rule}
\end{table}

\subsection{The Proof}
Our proof is based on the following theorem, adapted from
\citet{beacon,baader2005pushing}.

\begin{theo}
For an normalized $\mathcal{EL}$ ontology $\Omc_1$, we have
    $$
\Omc_1\models A\sqsubseteq B
$$
if and only if $A\sqsubseteq B$ can be derived using the inference rules
shown in Table~\ref{composition rule}.
\end{theo}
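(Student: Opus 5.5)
The proof has two directions. Soundness goes by induction on derivations. Completeness uses the standard canonical-model construction from the completion algorithm of \citet{baader2005pushing}, built from the set of derivable subsumptions.

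\textbf{Soundness.} I show by induction on the length of a derivation that every derived $X\sqsubseteq Y$ satisfies $\Omc_1\models X\sqsubseteq Y$. Fix a model $\Imc$ of $\Omc_1$ and check each rule.
\begin{itemize}
\item $\mathcal{R}_0$ is a tautology.
\item For $\mathcal{R}_1$: if $A^\Imc\subseteq A_i^\Imc$ for all $i$, then $A^\Imc\subseteq(A_1\sqcap\cdots\sqcap A_n)^\Imc\subseteq B^\Imc$. The axiom form $A\sqsubseteq B$ is the case $n=1$.
\item For $\mathcal{R}_2$: $A^\Imc\subseteq A_1^\Imc\subseteq(\exists r.B)^\Imc$.
\item For $\mathcal{R}_3$: every $d\in A^\Imc$ has an $r$-successor in $B_1^\Imc$. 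That successor lies in $B_2^\Imc$ by the inductive hypothesis on $B_1\sqsubseteq B_2$, so $d\in(\exists r.B_2)^\Imc\subseteq B^\Imc$.
\end{itemize}

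\textbf{Completeness.} Let $D$ be the set of all subsumptions derivable with the rules in Table~\ref{composition rule}. I define a canonical interpretation $\Imc$ whose domain is the set of concept names of $\Omc_1$:
\begin{itemize}
\item $X^\Imc=\{Y\mid Y\sqsubseteq X\in D\}$ for each concept name $X$;
\item $r^\Imc=\{(Y,Z)\mid Y\sqsubseteq\exists r.Z\in D\}$ for each role name $r$.
\end{itemize}
If $\top$ must be handled, I add the trivial rule for $\top$ or treat it as a concept name with $\top^\Imc$ equal to the whole domain.

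The core step is to show that $\Imc\models\Omc_1$, checking one axiom form at a time.
\begin{enumerate}
\item $A_1\sqcap\cdots\sqcap A_n\sqsubseteq B$: if $Y\in A_i^\Imc$ for all $i$, then $Y\sqsubseteq A_i\in D$, and $\mathcal{R}_1$ gives $Y\sqsubseteq B\in D$.
\item $A_1\sqsubseteq\exists r.B$: if $Y\sqsubseteq A_1\in D$, then $\mathcal{R}_2$ gives $Y\sqsubseteq\exists r.B\in D$. Hence $(Y,B)\in r^\Imc$, and $B\in B^\Imc$ by $\mathcal{R}_0$.
\item $\exists r.B_2\sqsubseteq B$: if $(Y,Z)\in r^\Imc$ and $Z\in B_2^\Imc$, then $Y\sqsubseteq\exists r.Z\in D$ and $Z\sqsubseteq B_2\in D$. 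Rule $\mathcal{R}_3$ with $B_1:=Z$ yields $Y\sqsubseteq B\in D$.
\end{enumerate}
Once $\Imc$ is a model, suppose $\Omc_1\models A\sqsubseteq B$. Since $A\in A^\Imc$ by $\mathcal{R}_0$, we get $A\in B^\Imc$, i.e.\ $A\sqsubseteq B\in D$.

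\textbf{Main obstacle.} The delicate point is aligning the premises of $\mathcal{R}_3$ with the definition of $r^\Imc$. Successors must be named concepts $Z$ that are witnessed by a derived $Y\sqsubseteq\exists r.Z$. This works only because, in normalized $\mathcal{EL}$, existentials on the right-hand side of an axiom have a concept-name filler and are introduced solely by $\mathcal{R}_2$. I would make this explicit, and also note that the domain is finite and nonempty.
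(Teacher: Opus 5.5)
Your proof is correct, and it is essentially the argument the paper relies on. The paper does not prove this theorem itself; it takes it from the completion calculus of Baader, Brandt and Lutz (via BEACON), and the correctness proof of that calculus is exactly your soundness induction plus the canonical model on concept names, with $r$-edges read off derived inclusions $Y\sqsubseteq\exists r.Z$.

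Two small touch-ups are needed:
\begin{itemize}
\item Make sure the query name $A$ is in the domain even if it does not occur in $\Omc_1$, for example by taking all concept names; otherwise the step ``$A\in A^\Imc$ by $\mathcal{R}_0$'' has no meaning.
\item Keep only the first of your two $\top$ options. Reading $\top$ as the whole domain without a rule that derives $X\sqsubseteq\top$ already fails for $\{\top\sqsubseteq B\}\models A\sqsubseteq B$, which no rule can derive. This is moot here, since the paper's normalized axioms use concept names only.
\end{itemize}
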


\begin{proof}{(of Theorem \ref{theo_complete})}

We complete the proof by induction on the minimum number of inference-rule
applications needed to derive $A\sqsubseteq B$ in Table ~\ref{composition rule} with $$
\Omc_1=\Omc'\cup\mathcal{S}.
$$ 
Denote the number by
$l(A\sqsubseteq B)$.
\begin{enumerate}
    \item \textbf{Base Case} 
    
    Suppose that $l(A\sqsubseteq B)=1$. Assume $A\neq B$, then
    $A\sqsubseteq B$ must be obtained by $\mathcal{R}_1$ or
    $\mathcal{R}_3$, with all premise axioms belonging to
    $
    \Omc_1=\Omc'\cup\mathcal{S}.
    $
    That is, the axioms $A\sqsubseteq A_i$ in the $\mathcal{R}_1$ case,
    or the axioms
    $$
    A\sqsubseteq\exists r.B_1
    \qquad\text{and} \qquad
    B_1\sqsubseteq B_2
    $$
    in the $\mathcal{R}_3$ case, must all belong to
    $\Omc'\cup\mathcal{S}$.

    \begin{itemize}
        \item Suppose first that the conclusion is obtained by
        $\mathcal{R}_1$.

       \textbf{ If $n=1$,} then $\mathcal{R}_1$ is of the form
        $$
        A\sqsubseteq A_1,\quad
        A_1\sqsubseteq B\in\Omc'\cup\mathcal{S}
        \quad\Longrightarrow\quad
        A\sqsubseteq B.
        $$
        Then, $\mathcal{S}$ is one of the following:
        $$
        \{A \sqsubseteq A_1\}, 
        \{A_1 \sqsubseteq B\},  \text{ or } 
        \{A \sqsubseteq A_1, A_1 \sqsubseteq B\}.
        $$
        In each case, \ours\ can directly identify the missing axioms in $\mathcal{S}$ by identifying $A_1$ in Stage 2a, Stage 2b, or Stage 3a for the first, second, or third case, respectively.

        \textbf{If $n=2$,} then we have the premise axioms of $\mathcal{R}_1$ are
        $$
        A\sqsubseteq A_1,\qquad
        A\sqsubseteq A_2,\qquad
        A_1\sqcap A_2\sqsubseteq B
        $$
        and $\mathcal{S}$ must be consist of some of them.
        
        Among them, $ A_1\sqcap A_2\sqsubseteq B$ can be obtained by \ours\ in the first round by identifying $A_1\sqcap A_2$ as an intermediate
        concept in Stage~3a. It then outputs the two axioms
        $$
        A\sqsubseteq A_1\sqcap A_2
        \qquad\text{and}\qquad
        A_1\sqcap A_2\sqsubseteq B.
        $$
        By the second observation at the beginning of the proof, the first
        axiom is decomposed into
        $$
        A\sqsubseteq A_1
        \qquad\text{and}\qquad
        A\sqsubseteq A_2.
        $$
        Then, each $A\sqsubseteq A_i\notin\Omc'$, where $i\in\{1,2\}$,
        can be obtained in the second round by applying \ours\ again and obtain $A\sqsubseteq A_i$ directly in
        Stage~3b. This completes the proof for this case.

        \item Suppose that the conclusion is obtained by $\mathcal{R}_3$.
        The argument is similar.  Note that, in this case, 
        $B_1\sqsubseteq B_2\not\in \mathcal{S}$ if
        $\exists r.B_1\sqsubseteq\exists r.B_2$ is already included.
        Indeed, if $B_1\sqsubseteq B_2\in\mathcal{S}$, then it can be
        replaced by the weaker axiom
        $$
        \exists r.B_1\sqsubseteq\exists r.B_2,
        $$
        and the resulting set still entails $A\sqsubseteq B$ together
        with $\Omc'$. This would contradict the assumption that
        $\mathcal{S}$ is a weakest set.

        Therefore, we have $\mathcal{S}\subseteq \{A\sqsubseteq\exists r.B_1, \exists r.B_2\sqsubseteq B\}$ whose axioms can be obatined as follows:
        
        $\exists r.B_2\sqsubseteq B$ can be obtained in the first application of \ours by
        identify $\exists r.B_2$ as an intermediate concept in Stage~3a.
        This produces
        $$
        A\sqsubseteq\exists r.B_2
        \qquad\text{and}\qquad
        \exists r.B_2\sqsubseteq B.
        $$
        
        $A\sqsubseteq\exists r.B_1$ can be obtained in the second round applications \ours\ with input query
        $A\sqsubseteq\exists r.B_2$ and identify
        $\exists r.B_1$ as an intermediate concept in Stage~3a. 

    \end{itemize}

    \item \textbf{Induction}
    
    We now prove the induction step. Assume that the conclusion holds
    for every inclusion $A\sqsubseteq B$ satisfying
    $$
    l(A\sqsubseteq B)<k.
    $$
    We show that it also holds when
    $$
    l(A\sqsubseteq B)=k.
    $$
    We again distinguish two cases according to the final inference rule
    used in a shortest derivation of $A\sqsubseteq B$. There are two cases:

    \begin{enumerate}
        \item Suppose that the final rule is $\mathcal{R}_1$. Thus, the
        last step of the derivation is
        $$
        \frac{
            A\sqsubseteq A_1,\ldots,A\sqsubseteq A_n
        }{
            A\sqsubseteq B
        },
        $$
        with
        $$
        \beta_0 := A_1\sqcap\cdots\sqcap A_n\sqsubseteq B
        \in\Omc'\cup\mathcal{S}.
        $$

        For each $i$, let $\mathcal{S}_i\subseteq\mathcal{S}$ be a weakest
        subset such that
        $$
        \Omc'\cup\mathcal{S}_i\models A\sqsubseteq A_i.
        $$
        Then we must have
        $$
        \bigcup_i\mathcal{S}_i
        =
        \mathcal{S}\setminus
        \{\beta_0\}.
        $$

        Otherwise, let $\mathcal{S}'$ defined by 
$$
\mathcal{S}' :=
\begin{cases}
\displaystyle \bigcup_i \mathcal{S}_i,
& \text{if } \beta_0 \notin \mathcal{S},\\[1ex]
\displaystyle \bigcup_i \mathcal{S}_i
\cup
\{A_1\sqcap\cdots\sqcap A_n\sqsubseteq B\},
& \text{if } \beta_0  \in \mathcal{S}.
\end{cases}
$$ Then $\mathcal{S}'$ is a strict subset of $\mathcal{S}$ and    $
     \Omc'\cup\mathcal{S}'\models A\sqsubseteq B
        $, contradicting the second 
        assumption.

        Moreover,\textbf{we have each $\mathcal{S}_i$ is a weakest set} such that
        $$
        \Omc'\cup\mathcal{S}_i\models A\sqsubseteq A_i.
        $$
        Otherwise, suppose that there exists another weakest set $\mathcal{S}_i'$ weaker
        than $\mathcal{S}_i$ such that
        $$
        \mathcal{S}_i\models \mathcal{S}_i',\quad \Omc'\cup\mathcal{S}_i'\models A\sqsubseteq A_i.
        $$
        Define
        $$
        \mathcal{S}^*
        =
        \mathcal{S}_i'
        \cup
        \bigcup_{j\neq i}\mathcal{S}_j,
        $$
        and additionally include
        $\beta_0$ in $\mathcal{S}^*$ if $\beta_0\in \mathcal{S}$. Then
        $$
        \Omc'\cup\mathcal{S}^*\models\alpha.
        $$
        Since $\mathcal{S}_i\models \mathcal{S}_i'$ and    $
        \bigcup_i\mathcal{S}_i
        =
        \mathcal{S}\setminus
        \{\beta_0\}
        $,  we have $\mathcal{S}\models \mathcal{S}^*$. Moreover, by weakest assumptions, we must have   $\mathcal{S}\models \mathcal{S}^*$. Thus, $\mathcal{S}\models  \mathcal{S}_i'\models \mathcal{S}_i$, this violate the subsumption that  $ \mathcal{S}_i$ is weakest subset $\mathcal{S}_i\subseteq\mathcal{S}$ such that
        $$
        \Omc'\cup\mathcal{S}_i\models A\sqsubseteq A_i.
        $$
        because $\mathcal{S}$ is a weaker subset that satifies the same property.

        Clearly,
        $$
        l(A\sqsubseteq A_i)<k
        $$
        for every $i$. Therefore, by the induction hypothesis, iterative
        applications of \ours\ recover every axiom in each
        $\mathcal{S}_i$. In addition, \ours\ recovers
        $A_1\sqcap\cdots\sqcap A_n\sqsubseteq B$ directly if it belongs
        to $\mathcal{S}$. Consequently, every axiom in $\mathcal{S}$ can be founded.

        \item Suppose that the final rule is $\mathcal{R}_3$ of the form         $$
        \frac{
            A\sqsubseteq\exists r.B_1,\qquad
            B_1\sqsubseteq B_2
        }{
            A\sqsubseteq B
        },
        $$
        with
        $$
        \beta_1:= \exists r.B_2\sqsubseteq B\in\Omc'\cup\mathcal{S}.
        $$

        In the first round of applying \ours with input query $\alpha$, we can assume \ours identify $\exists r.B_2$ as an intermediate concept in Stage 3a to
        decompose the target inclusion into
        $$
        A\sqsubseteq\exists r.B_2
        \qquad\text{and}\qquad
        \exists r.B_2\sqsubseteq B.
        $$
        Then in the second round, we could input $A\sqsubseteq\exists r.B_2$ to \ours and assume it identifying $\exists r. B_1$ in stage 3a, where produces
        $$
        A\sqsubseteq\exists r.B_1
        \qquad\text{and}\qquad
        \exists r.B_1\sqsubseteq \exists r.B_2.
        $$
        Note that by our assumption on iterative applying of \ours, the next iterations we could both input $\exists r.B_1\sqsubseteq \exists r.B_2$ and $B_1\sqsubseteq B_2$.
        
        If $A\sqsubseteq\exists r.B_1\not\in\Omc'\cup\mathcal{S}$, then it must itself be derived by an earlier
        inference. In particular, it must decomposed using
        $\mathcal{R}_2$. Then, there exists an atomic concept $A_1$
        such that
        $$
        A\sqsubseteq A_1
        $$
        is derivable and
        $$
        \beta_2:= A_1\sqsubseteq\exists r.B_2
        \in\Omc'\cup\mathcal{S}.
        $$

        Then the proof is simpler to the previous case by:
        \begin{enumerate}
            \item Let $\mathcal{S}_1, \mathcal{S}_2\subseteq\mathcal{S}$ be the weakest set such that
        $$
        \Omc'\cup\mathcal{S}_1\models A\sqsubseteq A_1.
        $$
        or 
         $$
        \Omc'\cup\mathcal{S}_2\models B_1\sqsubseteq B_2.
        $$
        \item replace $\beta_0$ by one of $\beta_1, \beta_2$ or both.
        \item applying the induction over    
        $$
        l(A\sqsubseteq A_1),\; l(B_1\sqsubseteq B_2)<k. 
        $$
        \end{enumerate}
        
    \end{enumerate}
\end{enumerate}

Hence, by induction on $l(A\sqsubseteq B)$, iterative applications of
\ours\ recover every axiom in $\mathcal{S}$.
\end{proof}

\section{Extra Results}

\subsection{Statistic of Datasets}
The statistic of generated dataset can be found in Table \ref{tab:ontologies}.

\begin{table}[ht]
\centering
\begin{tabular}{lrrr}
\toprule
\textbf{Dataset} & \textbf{Concepts} & \textbf{$\#|\mathcal{P}_{all}|$} & \textbf{Splits} (2a/2b/3a/3b)\\
\midrule
FoodOn$_A$         & 30,995  & 102{,}872     & 813/777/977/23 \\
Snomed$_A$         & 364,352 & 1{,}181{,}331 & 651/569/897/103 \\
GO$^{+}_{A}$ & 84,018 & 188{,}523 & 790/781/976/24 \\
Snomed$_\exists$   & 364,352 & 1{,}774       & - / - /1000/0 \\
\bottomrule
\end{tabular}
\caption{Dataset statistics (1,000 positive and 1,000 negative samples per dataset). 
$\#|\mathcal{P}_{\text{all}}|$ denotes the total number of possible positive samples. 
The ``Splits'' column reports the number of positive samples that can be solved in Stages 2a, 2b, and 3a, respectively; overlaps between stages may occur.}
\label{tab:ontologies}
\end{table}

\subsection{Per-stage performance}
Figure~\ref{fig:stage_combined} presents a detailed breakdown of the results at each stage, including true positives (TP), false positives (FP), false negatives (FN), and true negatives (TN).  

For both FoodOn$_A$ and Snomed$_A$, the majority of TP predictions are generated in Stage~2a, followed by Stages~2b and~3a. In contrast, most FPs originate from Stage~3a. This observation aligns with the retrieval performance reported in Table~\ref{tab:part1_detailed}, where Stage~3a exhibits relatively lower performance compared to Stages~2a and~2b.  
Furthermore, we observe that $\ours_{\text{SBERT}}$ produces almost no TP predictions under the X-F1 and X-F1* metrics, while $\ours_{\text{OnT}}$ achieves much better performance, indicating the superiority of fine-tuning in handling complex concepts.

\begin{figure}[htbp]
    \centering

    \begin{subfigure}{\linewidth}
        \centering
        \includegraphics[width=\linewidth]{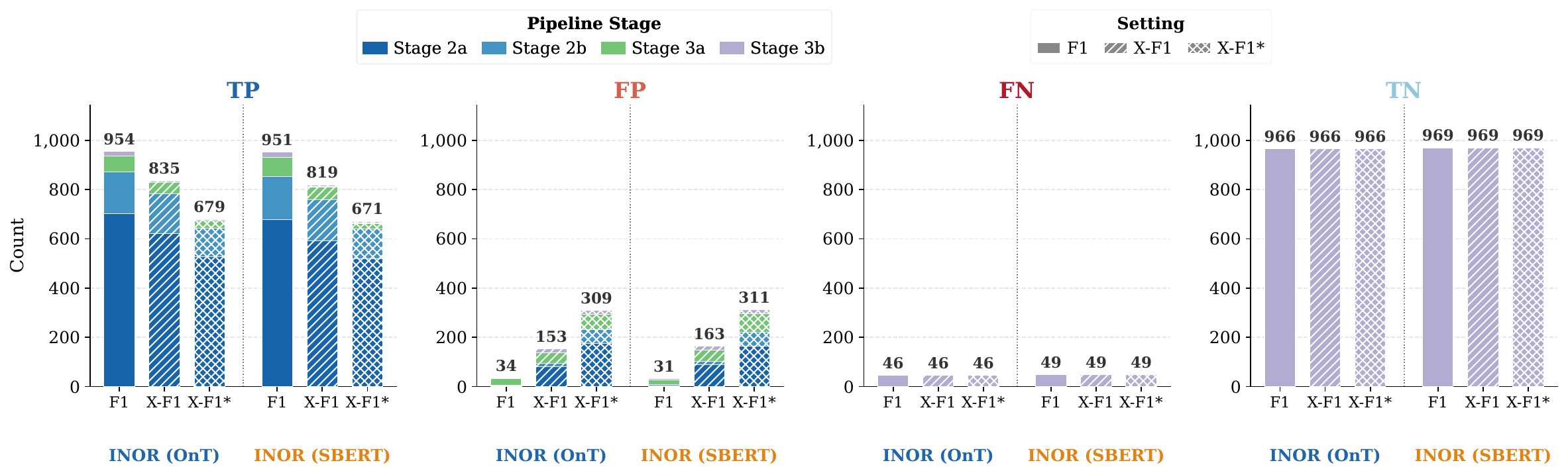}
        \caption{FoodOn$_A$}
        \label{fig:foodon}
    \end{subfigure}
    \begin{subfigure}{\linewidth}
        \centering
        \includegraphics[width=\linewidth]{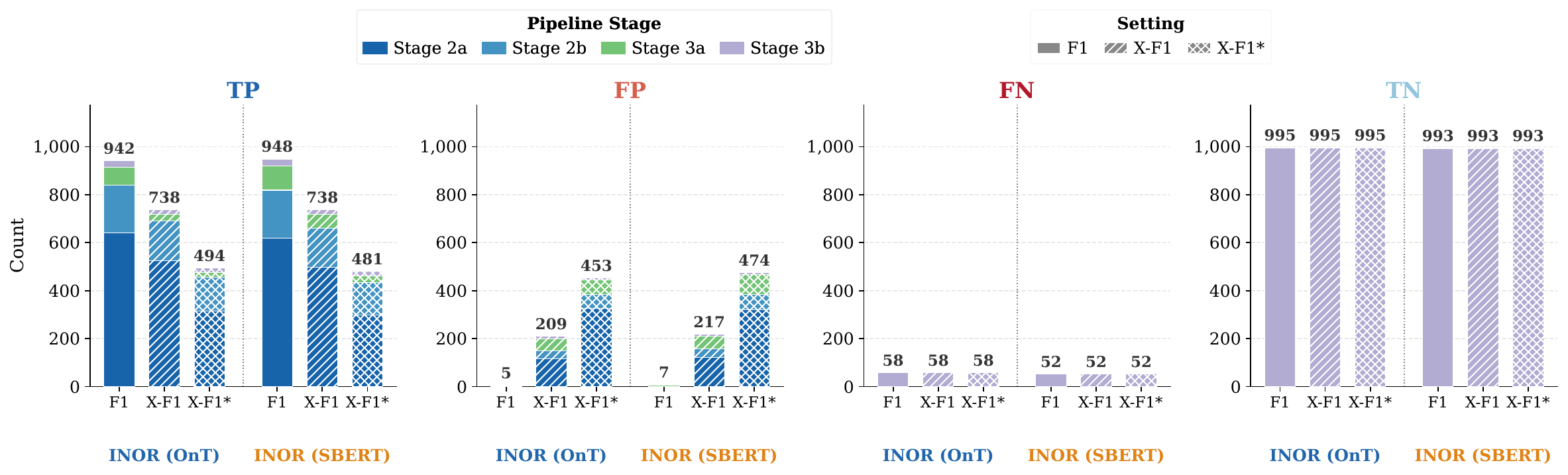}
        \caption{Snomed$_A$}
        \label{fig:snomed}
    \end{subfigure}

    \begin{subfigure}{\linewidth}
        \centering
        \includegraphics[width=\linewidth]{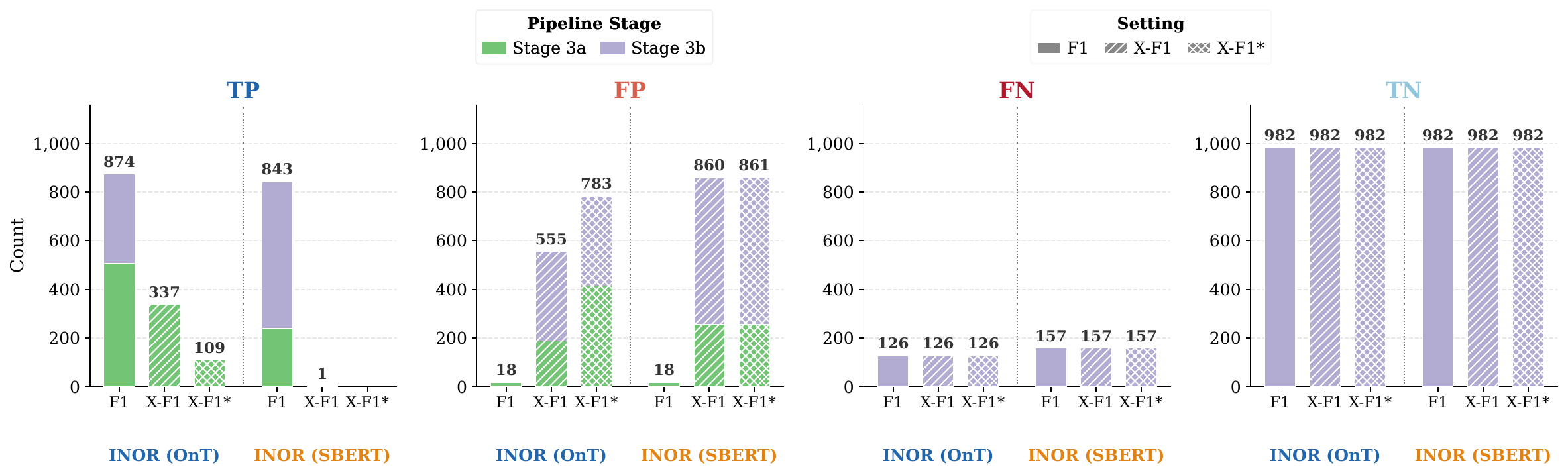}
        \caption{Snomed$_\exists$}
        \label{fig:existential}
    \end{subfigure}

 \caption{Detailed results across different stages.}
    \label{fig:stage_combined}
\end{figure}

\subsection{Direct Children/Parents vs.\ All}\label{app:direct_all}

In this section, we compare the performance impact of using only direct parents/children as candidates in Stage 2a/2b (denoted as \textit{DIRECT}, the default setting of \ours) versus using all transitive parent/child nodes (denoted as \textit{ALL}).

We first present statistics for the number of direct and transitive parent/child nodes associated with the evaluated positive samples. As shown in Figure~\ref{fig:distri_direct}, the average number of direct parents is approximately 1--1.2, which is substantially lower than the average number of direct children (8.6--32.7). This observation aligns with the structural properties of ontologies: a concept typically has only a few direct parents but may have many direct children.

Furthermore, Figure~\ref{fig:distri_direct_vs_all} illustrates that the number of transitive (\textit{ALL}) parents/children can be significantly larger than the number of direct ones, in some cases increasing by up to two orders of magnitude. This highlights the substantial expansion of the candidate space when transitive relationships are considered.

\begin{figure}[t]
    \centering
    \includegraphics[width=0.8\linewidth]{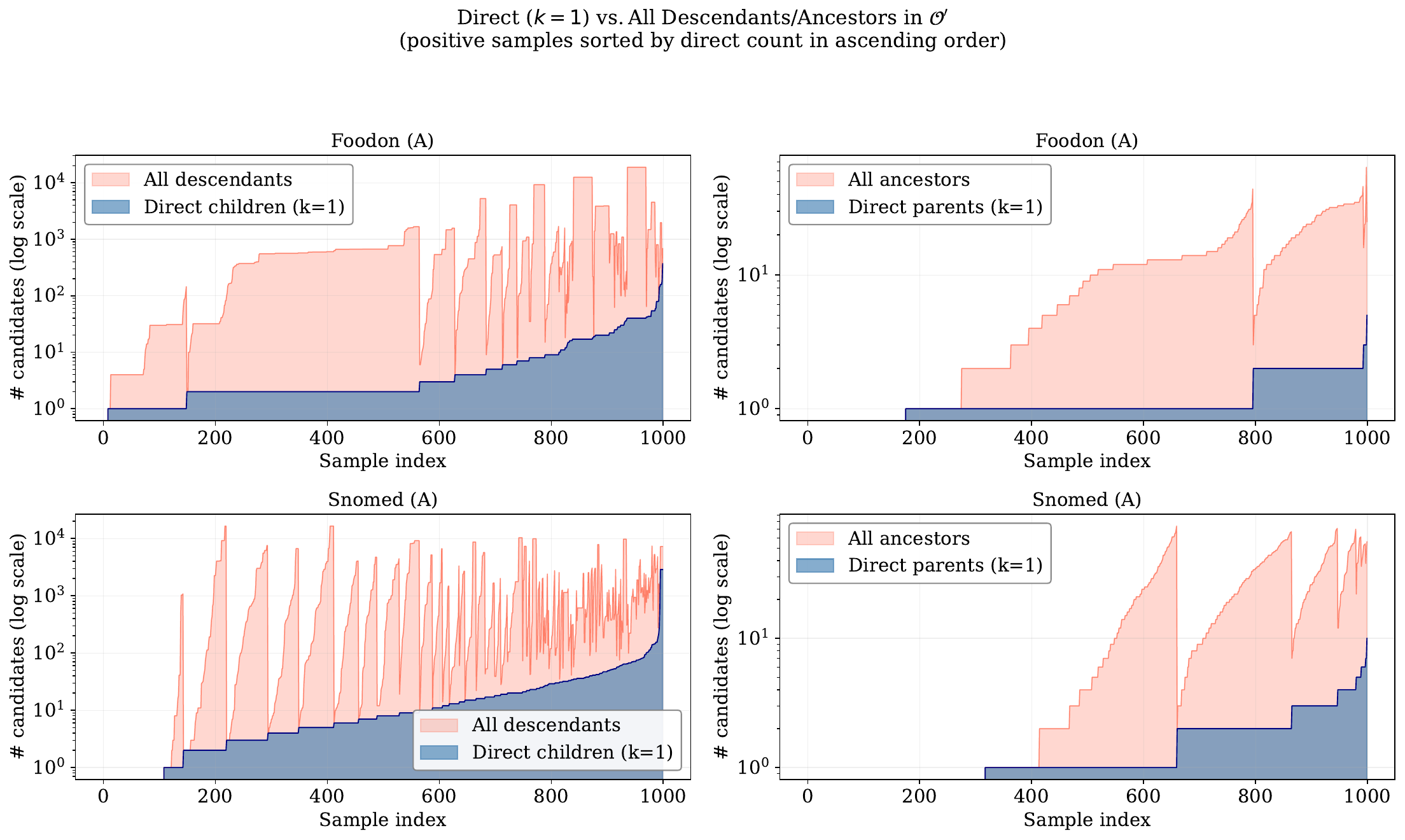}
    \caption{Comparison between the number of direct and the all children and parents. Samples are ordered by increasing number of direct parents/children.}
    \label{fig:distri_direct_vs_all}
\end{figure}

\begin{figure}[t]
    \centering
    \includegraphics[width=0.8\linewidth]{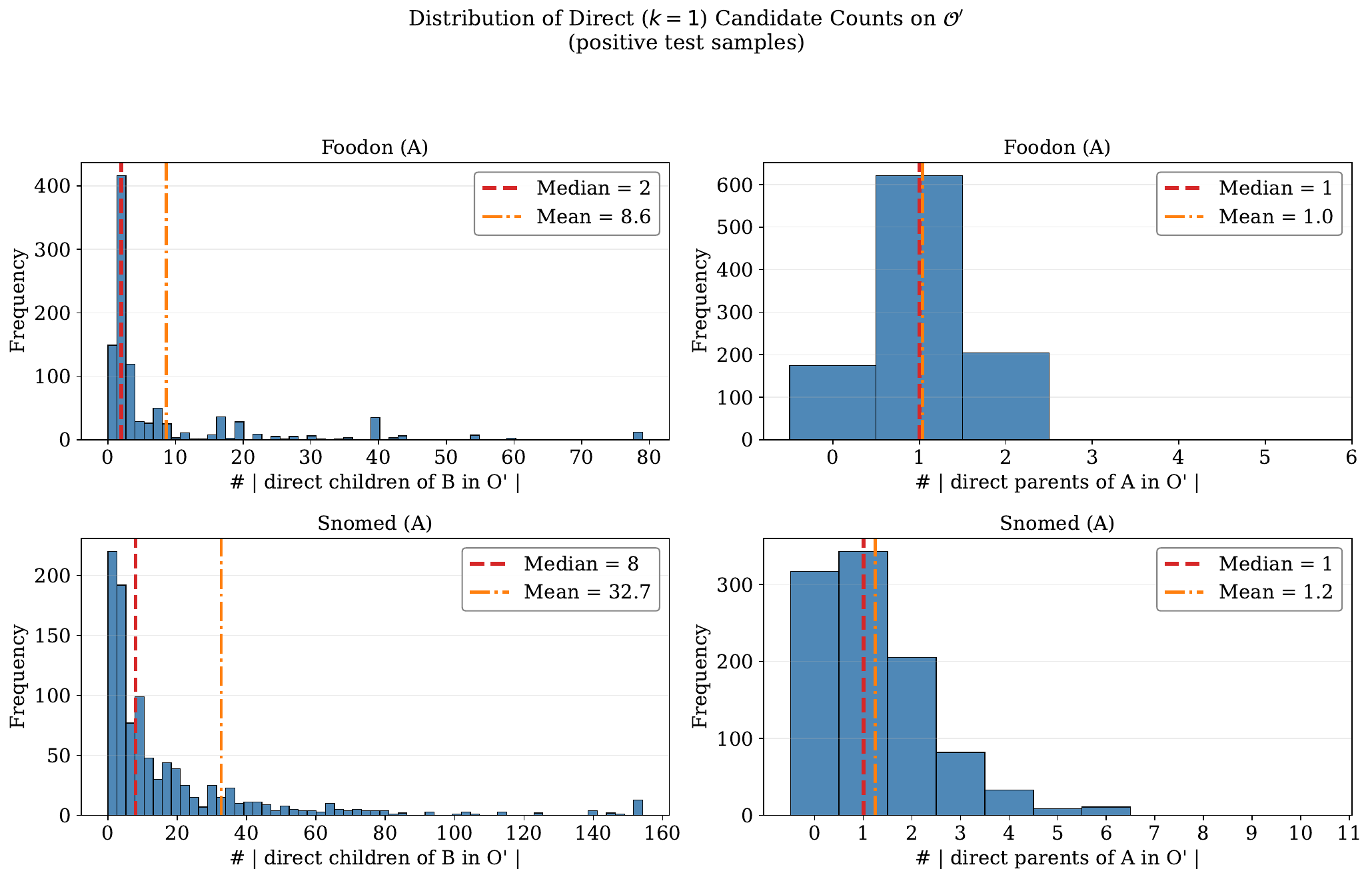}
    \caption{Distribution of the number of direct children and parents for  positive samples in test sets.}
    \label{fig:distri_direct}
\end{figure}

In Table~\ref{tab:hitk_per_group}, we present the retrieval performance under the \textit{DIRECT} 
and \textit{ALL} settings. As expected, the \textit{DIRECT} setting consistently outperforms the \textit{ALL} setting across all cases, demonstrating the advantage of restricting candidates to direct parents/children.

\begin{table*}[t]
\centering
\caption{%
  Comparison of retrieval performance using using direct or all Parent/Child. All the results is shown in (\textit{DIRECT}/\textit{ALL}).
}
\label{tab:hitk_per_group}
{\setlength{\tabcolsep}{4pt}
\begin{tabular}{@{}llc  ccccc@{}}
\toprule
\textbf{Dataset} & \textbf{\ours} & \textbf{Stage}
  & \textbf{H@1} & \textbf{H@5} & \textbf{H@10} & \textbf{H@100} & \textbf{MRR} \\
\midrule
\multirow{4}{*}{FoodOn$_A$}
  & \multirow{2}{*}{(OnT)}
    & 2a & \textbf{54.6}\,/\,17.1 & \textbf{96.4}\,/\,39.2 & \textbf{99.6}\,/\,51.4 & \textbf{100.0}\,/\,82.9 & \textbf{74.4}\,/\,28.4 \\
  & & 2b & \textbf{92.0}\,/\,87.9 & \textbf{100.0}\,/\,96.3 & \textbf{100.0}\,/\,98.2 & 100.0\,/\,100.0         & \textbf{96.0}\,/\,91.3 \\
\cmidrule(l){2-8}
  & \multirow{2}{*}{(SBERT)}
    & 2a & \textbf{48.3}\,/\,8.2  & \textbf{89.2}\,/\,17.8 & \textbf{94.0}\,/\,24.8 & \textbf{100.0}\,/\,61.0 & \textbf{67.6}\,/\,14.1 \\
  & & 2b & \textbf{93.2}\,/\,82.5 & \textbf{100.0}\,/\,92.3 & \textbf{100.0}\,/\,97.4 & 100.0\,/\,100.0         & \textbf{96.5}\,/\,87.0 \\
\midrule
\multirow{4}{*}{Snomed$_A$}
  & \multirow{2}{*}{(OnT)}
    & 2a & \textbf{66.4}\,/\,30.6 & \textbf{94.9}\,/\,67.1 & \textbf{97.4}\,/\,79.7 & \textbf{100.0}\,/\,98.3 & \textbf{78.6}\,/\,47.2 \\
  & & 2b & \textbf{88.6}\,/\,54.5 & \textbf{100.0}\,/\,77.5 & \textbf{100.0}\,/\,87.0 & 100.0\,/\,100.0         & \textbf{93.7}\,/\,64.8 \\
\cmidrule(l){2-8}
  & \multirow{2}{*}{(SBERT)}
    & 2a & \textbf{52.8}\,/\,24.4 & \textbf{88.6}\,/\,51.2 & \textbf{95.1}\,/\,61.6 & \textbf{99.7}\,/\,87.1  & \textbf{67.9}\,/\,36.9 \\
  & & 2b & \textbf{90.3}\,/\,55.2 & \textbf{100.0}\,/\,83.1 & \textbf{100.0}\,/\,91.2 & 100.0\,/\,100.0         & \textbf{94.8}\,/\,67.0 \\
\bottomrule
\end{tabular}
}
\end{table*}

In Table~\ref{tab:direct_vs_transitive_f1}, we report the overall performance of the \textit{DIRECT} and \textit{ALL} settings. While the \textit{ALL} setting occasionally achieves higher F1 scores in certain cases, the \textit{DIRECT} setting demonstrates superior overall performance. This advantage is particularly evident in the consistently higher X-F1 scores across all cases.

However, for the X-F1* metric, the \textit{ALL} setting outperforms \textit{DIRECT} in some scenarios, such as the hard negative setting in \textit{Ours (SBERT)}. This may be because the candidates in the \textit{DIRECT} setting are highly relevant and difficult to distinguish, which can lead the LLM to mistakenly verify them as true bridge concepts. As a result, the X-F1* score decreases, since this metric requires all identified bridge concepts to be correct; such errors therefore reduce overall performance.

\begin{table*}[t]
\centering
\caption{%
  Overall results of \textit{DIRECT} vs.\
  \textit{ALL}. All the results is shown in (\textit{DIRECT}/\textit{ALL}).
}
\label{tab:direct_vs_transitive_f1}
{\setlength{\tabcolsep}{4pt}
\begin{tabular}{@{}ll ccc ccc@{}}
\toprule
 \multirow{2}{*}{\textbf{Dataset}} & \multirow{2}{*}{\textbf{\ours}} 
 & \multicolumn{3}{c}{\textbf{Random Neg}} 
 & \multicolumn{3}{c}{\textbf{Hard Neg}} \\
\cmidrule(lr){3-5} \cmidrule(lr){6-8}
 & 
  & \textbf{F1} & \textbf{X-F1} & \textbf{X-F1*}
  & \textbf{F1} & \textbf{X-F1} & \textbf{X-F1*}\\
\midrule
\multirow{2}{*}{FoodOn$_A$}
  & (OnT)
    & 96.0/\textbf{96.8} & \textbf{89.3}/87.6 & \textbf{79.3}/75.0 
    & 84.5/\textbf{86.9} & \textbf{78.1}/78.0 & \textbf{68.4}/65.8 \\
\cmidrule(l){2-8}
  & (SBERT)
    & \textbf{96.0}/95.3 & \textbf{88.5}/83.9 & \textbf{78.8}/75.8 
    & 84.4/\textbf{85.5} & \textbf{77.2}/74.3 & \textbf{68.0}/66.8 \\
\midrule
\multirow{2}{*}{Snomed$_A$}
  & (OnT)
    & \textbf{96.8}/95.7 & \textbf{84.7}/77.6 & \textbf{65.9}/64.1 
    & 86.2/\textbf{87.1} & \textbf{74.5}/69.5 & \textbf{56.9}/\textbf{56.9} \\
\cmidrule(l){2-8}
  & (SBERT)
    & \textbf{97.0}/96.2 & \textbf{84.6}/80.4 & 64.6/\textbf{66.9} 
    & 85.6/\textbf{87.0} & \textbf{73.6}/71.6 & 55.0/\textbf{59.0} \\
\bottomrule
\end{tabular}
}
\end{table*}

\subsection{Ablation of LLMs, embedding models, and retrieval size}

To better understand the contribution of different parts in \ours, we conduct a systematic ablation study along three dimensions:
(1) the base embedding model used for OnT;
(2) the LLM employed in the verification stage; and
(3) the candidate retrieval size $k$.

For the embedding model, we evaluate two alternatives:
\begin{itemize}
    \item \texttt{all-MiniLM-L6-v2} (384-dimensional, same as \texttt{all-MiniLM-L12-v2} used by default) and
    \item  \texttt{all-mpnet-base-v2} (768-dimensional).
\end{itemize}
For the LLM, we compare \texttt{Qwen3.5-9B} in thinking mode and \texttt{Qwen3.5-4B} (non-thinking) against the default \texttt{Qwen3.5-9B} (non-thinking).
For retrieval, we vary the number of candidates using $k \in \{5, 20\}$, compared to the default setting of $k=10$.

The results are summarized in Table~\ref{tab:ablation}. We can see that, overall, variations in the embedding model and the choice of $k$ lead to only minor performance differences. The former suggests that all evaluated embedding models are sufficiently expressive for our datasets, while the latter is consistent with the strong retrieval performance reported in Table~\ref{tab:part1_detailed}, indicating that changing $k$ has limited impact on the final results.

In contrast, the choice of LLM has a more noticeable effect. Specifically, the thinking mode slightly reduces performance on random negative cases, but yields a slight improvement in X-F1 for hard negative cases.

\begin{table*}[t]
\centering
\caption{Ablation study results on FoodOn$_A$. Default setting is shaded.}
\label{tab:ablation}
\small
\setlength{\tabcolsep}{4pt}

\begin{tabular}{llllcc cc}
\toprule
\multirow{2}{*}{\textbf{Variant}} 
& \multirow{2}{*}{\textbf{Embed}} 
& \multirow{2}{*}{\textbf{LLM}} 
& \multirow{2}{*}{$\mathbf{k}$} 
& \multicolumn{2}{c}{\textbf{Random Neg}} 
& \multicolumn{2}{c}{\textbf{Hard Neg}} \\
\cmidrule(lr){5-6} \cmidrule(lr){7-8}
& &(\texttt{Qwen3.5}) & 
& \textbf{F1} & \textbf{X-F1} 
& \textbf{F1} & \textbf{X-F1} \\
\midrule

\multirow{3}{*}{(1) Embed}
& MiniLM-L6  & 9B & 10 
& \textbf{0.966} & \textbf{0.907} 
& 0.856 & 0.799 \\
& MPNet      & 9B & 10 
& 0.963 & 0.901 
& \textbf{0.858} & 0.799 \\
& \hl{MiniLM-L12} & \hl{ 9B} & \hl{10} 
& \hl{0.964} & \hl{0.905} 
& \hl{0.857} & \hl{\textbf{0.800}} \\
\midrule
\multirow{3}{*}{(2) LLM}
& MiniLM-L12 &  9B (think) & 10 
& 0.907 & 0.862 
& 0.855 & \textbf{0.808} \\
& MiniLM-L12 &  4B & 10 
& 0.953 & 0.865 
& 0.825 & 0.740 \\
& \hl{MiniLM-L12} & \hl{ 9B} & \hl{10} 
& \hl{\textbf{0.964}} & \hl{\textbf{0.905}} 
& \hl{\textbf{0.857}} & \hl{0.800} \\
\midrule
\multirow{3}{*}{(3) Top-$k$}
& MiniLM-L12 &  9B & 5 
& \textbf{0.966} & 0.900 
& \textbf{0.860} & 0.794 \\
& MiniLM-L12 &  9B & 20 
& 0.964 & \textbf{0.907} 
& 0.855 & 0.799 \\
& \hl{MiniLM-L12} & \hl{ 9B} & \hl{10} 
& \hl{0.964} & \hl{0.905} 
& \hl{0.857} & \hl{\textbf{0.800}} \\
\bottomrule
\end{tabular}
\end{table*}

\section{Prompts}

The prompts used for the LLMs in Stages~2a/2b and 3a setting are illustrated in Figures~\ref{fig:llm_prompt_template} and~\ref{fig:prompt_v6}, respectively. Figure~\ref{fig:llm_prompt_template} focuses on the downward stage (\textit{i.e.}, Stage~2a), while Figure~\ref{fig:prompt_v6} shows the prompt used for handling complex cases, where role restrictions of the form $\exists r.\,B$ are treated as candidate sets in Stage~3a. The prompt for Stage 3b is illustrated in Figure \ref{fig:prompt_LLM_only}.

\begin{figure*}[t]
\centering
\begin{tcolorbox}[
  width=\linewidth,
  colback=white,
  colframe=black,
  boxrule=0.8pt,
  arc=2pt,
  left=6pt,right=6pt,top=6pt,bottom=6pt,
  title=\textbf{LLM Prompt},
  fonttitle=\bfseries
]

\texttt{\small
You are an ontology expert. Your task is to determine subsumption (is-a) relationships between concepts. 
Answer only YES or NO for each question. Do not explain.
}

\vspace{1em}

\texttt{We want to determine: Is "\{A\}" a subclass of the following concepts:\\
- \{candidate\_1\}\\
- \{candidate\_2\}\\
$\ldots$ \\}

\texttt{For each candidate, answer YES if "\{A\}" is a subclass of it, otherwise NO.\\}

\texttt{1. Is "\{A\}" a subclass of "\{candidate\_1\}"?\\
2. Is "\{A\}" a subclass of "\{candidate\_2\}"?\\
$\ldots$ \\}

\texttt{Answer format (one per line):\\
1. YES/NO\\
2. YES/NO}
\end{tcolorbox}
\caption{Prompt template used for Stage 2a. The prompts for Stage 2b/3a follow a similar procedure, but with different candidate sets: candidates are parents of ``A'' for Stage 2b, or all atomic concepts for Stage 3a. The questions ask whether the candidates are subclasses of ``B'' for Stage 2a, or both directions for Stage 3a.}
\label{fig:llm_prompt_template}

\end{figure*}

\begin{figure*}[t]
\centering

\begin{tcolorbox}[
  width=\linewidth,
  colback=white,
  colframe=black,
  boxrule=0.8pt,
  arc=2pt,
  left=6pt,right=6pt,top=6pt,bottom=6pt,
  title=\textbf{LLM Prompt},
  fonttitle=\bfseries
]


\begin{verbatim}
You are an ontology expert analyzing concept relationships.

Query: Is "{A}" a subclass of "{B}"?

Below are {n} candidate existential restrictions that might justify
this classification. Each is of the form "role some Filler".


For each candidate, answer YES if:
1. "{A}" plausibly has this role-filler property
2. This filler captures the essential characteristic that defines
   membership in "{B}"

{numbered list of candidates}

Respond in the format:
1. YES/NO
2. YES/NO
...
Only output the numbered YES/NO answers, nothing else.
\end{verbatim}

\end{tcolorbox}

\caption{Prompt template for Stage~3a with relation restriction concepts $\exists r. B$ as candidates.
\texttt{\{A\}} and \texttt{\{B\}} are replaced with concept labels; \texttt{\{n\}} with
the number of candidates; the numbered list enumerates each candidate as
``\texttt{r some B}''.}
\label{fig:prompt_v6}

\end{figure*}

\begin{figure*}[t]
\centering
\begin{tcolorbox}[
  width=1\linewidth,
  colback=white,
  colframe=black,
  boxrule=0.8pt,
  arc=2pt,
  left=6pt,right=6pt,top=6pt,bottom=6pt,
  title=\textbf{LLM Prompt},
  fonttitle=\bfseries
]
\begin{verbatim}
You are an ontology expert in the biomedical and food domains. 
Determine whether the following subsumption relationship holds.

Is "{A}" a subclass of "{B}"?

Answer with only YES or NO.
\end{verbatim}

\end{tcolorbox}

\caption{Prompt template for Stage~3b and LLM-only baseline.}
\label{fig:prompt_LLM_only}
\end{figure*}


\end{document}